\theoremstyle{plain}
\newtheorem{theorem}{Theorem}[section]
\newtheorem{proposition}[theorem]{Proposition}
\newtheorem{lemma}[theorem]{Lemma}
\newtheorem{corollary}[theorem]{Corollary}
\theoremstyle{definition}
\theoremstyle{remark}
\newcommand{\Prob}[1]{{\mathbb{P}\left( #1 \right)}}
\newcommand{\EE}{\mathbb{E}}
\newcommand{\RR}{\mathbb{R}}
\newcommand{\calN}{\mathcal{N}}
\newcommand{\Pa}{\text{Pa}}
\newcommand{\Cat}{\text{Cat}}
\DeclareMathOperator{\trace}{trace}
\tikzset{
    style A/.style={minimum width=2cm, inner sep=4pt, outer sep=4pt},
    style B/.style={minimum width=2.5cm, inner sep=4pt, outer sep=4pt},
    style B/.style={minimum width=3cm, inner sep=4pt, outer sep=4pt},
}
\icmltitlerunning{Causal disentanglement of multimodal data}
\begin{document}

\onecolumn
\icmltitle{Causal disentanglement of multimodal data}

\icmlsetsymbol{equal}{*}

\begin{icmlauthorlist}
\icmlauthor{Elise Walker}{xxx}
\icmlauthor{Jonas A. Actor}{xxx}
\icmlauthor{Carianne Martinez}{yyy,comp}
\icmlauthor{Nathaniel Trask}{upenn}

\end{icmlauthorlist}

\icmlaffiliation{xxx}{Center for Computing Research, Sandia National Laboratories, Albuquerque, NM, USA}
\icmlaffiliation{yyy}{Applied Information Sciences Center, Sandia National Laboratories, Albuquerque, NM, USA}
\icmlaffiliation{comp}{School of Computing and Augmented Intelligence, Arizona State University, USA}
\icmlaffiliation{upenn}{School of Engineering and Applied Science, University of Pennsylvania, USA}

\icmlcorrespondingauthor{Nathaniel Trask}{natrask@sandia.gov}

\icmlkeywords{Causal models, multimodal machine learning, physics-informed machine learning, variational inference, variational autoencoders, fingerprinting, mixture of experts}

\printAffiliationsAndNotice{} 

\vskip 0.3in

\begin{abstract}
Causal representation learning algorithms discover lower-dimensional representations of data that admit a decipherable  interpretation of cause and effect;
as achieving such interpretable representations is challenging,
many causal learning algorithms utilize elements indicating prior information, such as (linear) structural causal models, interventional data, or weak supervision. 
Unfortunately, in exploratory causal representation learning, such elements and prior information may not be available or warranted. Alternatively, scientific datasets often have multiple modalities or physics-based constraints, and the use of such scientific, multimodal data has been shown to improve disentanglement in fully unsupervised settings. 
Consequently, we introduce a causal representation learning algorithm (causalPIMA) that can use multimodal data and known physics to discover important features with causal relationships. 
Our innovative algorithm utilizes a new differentiable parametrization to learn a
directed acyclic graph (DAG) together with a 
latent space of a variational autoencoder in an end-to-end differentiable framework via a single, tractable evidence lower bound loss function. We place a Gaussian mixture prior on the latent space and identify each of the mixtures with an outcome of the DAG nodes; this novel identification enables feature discovery with causal relationships.
Tested against a synthetic and scientific datasets, our results demonstrate the capability of learning an interpretable causal structure while simultaneously discovering key features 
in a fully unsupervised setting.

\end{abstract}


\section{Introduction}

To achieve autonomous scientific discovery, scientists are rapidly collecting large scientific datasets with a growing number of complex modalities.
Such large, multimodal scientific datasets extend beyond the limits of human cognition and thereby necessitate ML-driven methods to identify hidden, underlying factors in the data~\cite{boyce2019autonomous, auto_sci}.
The field of disentangled representation learning seeks to identify hidden features of data through an interpretable latent representation~\cite{rep_learning}. 
Variational autoencoders (VAE) frameworks are often used in representation learning to provide a meaningful, disentangled representation of data in a latent space~\cite{higgins2017betavae}. 
Physics-informed multimodal autoencoders (PIMA) have demonstrated the ability to detect features in multimodal datasets while incorporating known physics to aid in disentanglement~\cite{pima}.

One shortcoming of representation learning methods is that they typically do not consider any causal relationships. Representation learning has long been used to describe \textit{how} random variables relate to each other based on observable data, but does not address the \textit{why} behind random variable correlations. For example, many VAE frameworks assume that features are independent. Real-world data have natural correlative and causal relationships, however.  
To capture causal dependencies between random variables, recent works introduce causal inference into representation learning~\cite{causalVAE, seigal, causalGAN}.

Causal inference commonly identifies a directed acyclic graph (DAG) on a set of random variables representing features, where each node of the DAG is a random variable and directed edges between the nodes represent a causal relationship~\cite{causal_rep_learn}. Traditionally, learning a DAG reduced to expensive, combinatorial searches, e.g.~\cite{ges, linngam, pc}. Recent methods of learning DAGs, however, utilize a continuous, differentiable optimization scheme, which bypasses the otherwise laborious combinatorial search in the space of all DAGs~\cite{zheng2018dags}.

Many works are interested in learning a causal DAG on human-specified features from data, or, alternatively, learning a data distribution given a known DAG. We, however, are interested \textit{causal representation learning}, which means learning a causal DAG in concert with learning a lower-dimensional representation of data. Current frameworks for causal representation learning often rely on causal structural models, interventional data, or labels on data features. Such assumptions and interventions are deemed necessary for finding unique, or indentifiable, DAGs. We, however, are considering the exploratory setting where we are not concerned as much with unique, identifiable models so much as identifying plausible causal patterns within datasets where no prior additional information on the causal characteristics is available.
In lieu of additional causal information or assumptions, we follow~\cite{pima} and instead rely on multiple modalities or physics-based constraints of the data.
In this paper we present causalPIMA: a fully unsupervised causal representation learning framework capable of handling multiple modalities and physics-based constraints.
In particular, we adapt a new DAG-learning structure to the latent space of the PIMA framework. The result is a multimodal variational autoencoder with physics-based decoder capabilities such that clustering in the latent space follows a DAG structure that is learned simultaneously with the variational autoencoder embedding. The ability to handle multimodal data with physical constraints makes our algorithm unique from other causal representation learning algorithms. 

\subsection{Related Works}
The references detailed below give a non-comprehensive overview of the current work in representation causal inference, as well as references that informed our algorithmic development.

\paragraph{Continuously learning DAGs.}
A continuous optimization strategy for learning DAGs is first introduced in~\cite{zheng2018dags}, where the key component was developing new conditions for enforcing acyclic DAGs. Works such as~\cite{yu2020neur} further build off of this idea and introduce new continuous constraints for learning DAGs. Applications of continuous optimization of DAGs include~\cite{yu2021dags, yu2019gnn, causalVAE}. In contrast, our DAG parametrization is inspired by Hodge theory~\cite{jiang2011statistical,lim2020hodge} where we view edges as the flows between nodes.
Furthermore, our novel parametrization includes a temperature parameter that regularizes the edge indicator function in order to avoid local minimum while training.

\paragraph{Causal representation learning.}
Much of causal inference seeks to fit a DAG to data, or otherwise already assume a DAG and looks to fit the data to the DAG. Causal representation learning aims to learn a lower-dimensional representation of data with a causal interpretation. In~\cite{causalVAE}, the authors introduce a linear structural causal model into a VAE framework. Their framework enables counterfactual data generation and has some identifiability guarantees under set assumptions. Their approach requires weak supervision, however, in order to achieve disentanglement and identifiability.
Our method differs from that of~\cite{causalVAE} in that our algorithm can handle multiple modalities, known physics, and, most significantly, is completely unsupervised. A fully unsupervised framework is necessary for truly exploratory settings, such as autonomous scientific discovery. 

A fully linear causal representation learning approach is introduced in~\cite{seigal}, where unimodal data is factored into a linear causal model and a lower-dimensional representation. The primary objective of this work is to provide identifiability analysis in causal disentanglement. Indeed, this unimodal method, given interventional data, is guaranteed to be identifiable given a pure intervention on each random variable of the DAG. Our work differs from that of~\cite{seigal} in that we do not assume linearity or interventional data. Indeed our work finds causal relationships and multimodal, nonlinear representations in settings where interventional data is not available.

\paragraph{Latent representations of scientific datasets.}
Scientific datasets often are artisan, consist of various modalities, and obey physics constraints. 
Physics-informed multimodal autoencoders (PIMA), introduced in~\cite{pima}, use a VAE framework to learn a joint representation of multimodal data with optional physical constraints on the decoders. In particular, they show that additional modalities can improve classification and disentanglement.
Consequently, we chose to base our causal algorithm on the PIMA framework. The result is that our algorithm has a tractable, closed-form evidence lower bound loss function and can also handle incomplete multimodal data and incorporate simulators, reduced-order models, or other physics-based predictions.

\begin{figure}
    \centering
    \includegraphics[width=\textwidth]{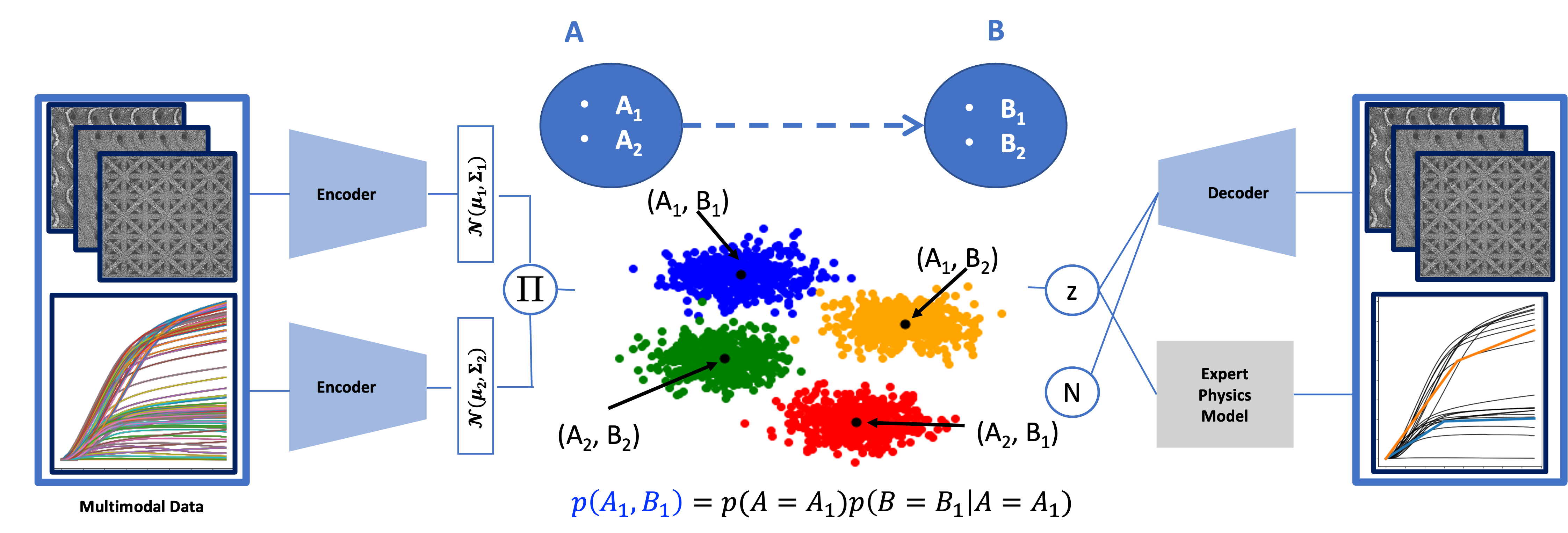}
    \caption{A cartoon description of the causalPIMA algorithm. Each modality is variationally encoded into latent space and unimodal embeddings are joined through a product of experts. Points in the latent space are clustered by a Gaussian mixture prior. Gaussian clusters are identified with outcomes of the nodes of a trainable DAG in that the probability of belonging to a given cluster is equal to the probability of an outcome of the DAG nodes. For example, the probability of belonging to the blue cluster is equal to the probability $p(A=A_1, B=B_1)$ where $A$ and $B$ are nodes in the trainable DAG. Points in the latent space are decoded back into each modality, with the option of incorporating known physics into the decoders. \label{fig:framework}}
\end{figure}

\section{Algorithm framework}
Given data $\mathbf{X} = \{ X_1, \dots, X_M \}$ from $M$ distinct modalities, we seek a common embedding into a latent space $Z \in \RR^J$, where the latent space representation admits distinct clusters based on encoded features of the data. 
We assume that our embedded representations are 
described by $L$ categorical features $\mathbf{N}_1,\dots ,\mathbf{N}_L$, where each feature $\mathbf{N_\ell}$ is a categorical random variable with $C_\ell$ outcomes. Moreover, we assume that there is a causal relationship between features, where the relationship is representable by a directed acyclic graph (DAG). That is, we assume that each feature $\mathbf{N}_\ell$ is a node in a DAG $G$. Letting $\text{Pa}(\mathbf{N}_\ell)$ denote the immediate parents of feature node $N_\ell$, we assume the joint distribution of the feature nodes follows the \textit{Markov factorization property}: 
\begin{equation}
p(\mathbf{N}) := p(\mathbf{N_1},\dots,\mathbf{N}_L) = \prod_{\ell=1}^L p(\mathbf{N}_\ell \vert \text{Pa}(\mathbf{N}_\ell)).
\end{equation}
We relate the features nodes of the DAG to the latent space $Z$ through identification. That is, we enforce a Gaussian mixture model (GMM) prior on the latent space $Z$. Each Gaussian in the mixture corresponds to a unique outcome of the joint distribution $\mathbf{N}$. In particular, the number of clusters in our GMM is $C = C_1 \cdots C_L$ and the categorical probability of each cluster is given by $\mathbf{N}$.
Appendix~\ref{app:nomen} contains a summary of notation (Table~\ref{table:notation})  as well as a sketch depicting how the causal graph of features is related to the latent space embedding (Figure \ref{fig:latent-space-sketch}).

We construct our embedding and latent space representation via a multimodal variational autoencoder, with distributions for the prior $p$ and posterior $q$. Following work such as~\cite{dilokthanakul2016deep, jiang2016variational}, we train our variational autoencoder through finding distributions $p$, $q$, and DAG $G$ which maximize the evidence lower bound (ELBO) loss:

\begin{equation}
\mathcal{L} = \mathbb{E}_{q(Z,N|\mathbf{X})} \left[ \log \frac{p(\mathbf{X},Z,\mathbf{N})}{q(Z,\mathbf{N}|\mathbf{X})} \right].
\end{equation}

We assume independence of decoding mechanisms for each modality for our prior, and assume mean-field separability for the posterior. These assumptions respectively give:

\begin{align} \label{eq:assumptions}
    p(\mathbf{X} \vert Z,\mathbf{N}) = \prod_{m=1}^M p(X_m \vert Z,\mathbf{N}) &\qquad\text{ and }\qquad
    q(Z,\mathbf{N} \vert \mathbf{X}) = q({Z\vert\mathbf{X}}) q({\mathbf{N}\vert\mathbf{X}}).
\end{align}

The ELBO above is computationally tractable through strategic framework decisions. In particular, 
our framework (1) utilizes unimodal deep encodings with Gaussian outputs, (2) fuses the unimodal deep encodings via a product of experts (PoE), (3) models clusters in the latent space as a mixture of Gaussians, (4) computes the probability of each cluster as the joint probability of the nodes a trainable DAG, and (5) utilizes a mixture of deep decoders with the optional capability of physics-informed decoders for modalities suitable to expert modeling. By assuming Equation~\ref{eq:assumptions} and extensively using Gaussians, the ELBO separates as sums of expectations of Gaussian distributions. In the case of Gaussians with diagonal covariance,~\cite{jiang2016variational} gives a closed-form solution to compute such an expectation (see Corollary~\ref{lemma:int-gaussian-log-gaussian} in Appendix~\ref{app:multivariate-gaussian-derivation}). For general Gaussian distributions, we give the closed-form solution in Lemma~\ref{lemma:int-multivariate-gaussian-log-gaussian} of Appendix~\ref{app:multivariate-gaussian-derivation}. For simplicity, we assume Gaussian distributions with diagonal covariance throughout this work.

Our algorithmic framework thus consists of (1) a multimodal variational autoencoder with a Gaussian mixture prior and (2) a parameterization of our DAG and the causal structure it induces. We describe each of these components in the subsections below.

\subsection{Multimodal variational autoencoder with Gaussian mixture prior}
Our mulitmodal representation learning framework amounts to a variational autoencoder (VAE) with a Gaussian mixture model (GMM) prior on the latent space $Z$. Our GMM is informed by a DAG $G$ where all nodes of $G$ are categorical random variables. In particular, we identify each Gaussian in the latent space with an outcome on the nodes of $G$. Thus the total number of Gaussians in the latent space is $C_1 \cdots C_L$, where $L$ is the number of nodes and $C_\ell$ is the number of outcomes of the $\ell^{th}$ node $\mathbf{N}_\ell$. The joint random variable of all nodes is $\mathbf{N}$, which indexes the clusters in the GMM. In essence, we are putting a causal prior on the distribution of the clusters so that the probability of belonging to cluster $(c_1, \dots, c_L)$ is given by $\mathbf{A}_{c_1, \dots, c_L}=p(\mathbf{N}_{c_1, \dots, c_L})$. We assume each cluster in the mixture is a Gaussian of the form $p(Z \vert \mathbf{N}_{c_1,\dots,c_L}) \sim \calN(\widetilde{\mu}_{c_1,\dots,c_L}, \widetilde{\sigma}_{c_1,\dots,c_L}^2 \mathbf{I})$, where the parameters $\widetilde{\mu}_{c_1,\dots,c_L}$ and $\widetilde{\sigma}_{c_1,\dots,c_L}^2$ are either trainable variables, or computed using block-coordinate maximization strategy outlined in Section~\ref{sec:training}. 

We handle the multimodal embedding and decoding of the VAE in the same manner as~\cite{pima}. In particular, we use neural network encoders to embed each modality as a Gaussian and then combine these embeddings using a PoE. That is, for each modality $m$ we assume $q(Z \vert X_m) \sim \calN(\mu_{m}, \sigma_{m}^2 \mathbf{I})$, where $[\mu_m, \sigma_m^2] = F_m(X_m; \theta_m)$ for a neural network $F_m$ with trainable parameters $\theta_m$. We deterministically compute the multimodal embedding from the unimodal ones via the identity $q(Z \vert \mathbf{X}) \sim \calN(\mu, \sigma^2 \mathbf{I}) = \alpha \prod_{m=1}^M \calN(\mu_{m}, \sigma_{m}^2\mathbf{I})$,  where $\alpha$ is a normalization constant and 
\begin{align} \begin{split}
    \sigma^{-2} = \sum_{m=1}^{M} \sigma_{m}^{-2} &\qquad\text{ and }\qquad \frac{\mu}{\sigma^2} = \sum_{m=1}^M \frac{\mu_{m}}{\sigma_{m}^2}.
\end{split}\end{align}
During training, the multimodal distribution is sampled using the reparametrization trick. That is, we sample $\epsilon \sim \calN(0, \mathbf{I})$ and compute $z = \mu + \epsilon \odot \sigma$, where $\odot$ is the Hadamard product.

Our decoders output a Gaussian for each modality $p(X_m \vert Z, \mathbf{N}_{c_1, \dots, c_L}) \sim \calN(\widehat{\mu}_{m;c_1, \dots, c_L}, \widehat{\sigma}_{m; c_1, \dots, c_L}^2 \mathbf{I})$. The Gaussians' parameters are determined by neural networks $D_{m; c_1, \dots, c_L}$, i.e. $[\widehat{\mu}_{m;c_1, \dots, c_L}, \widehat{\sigma}_{m; c_1, \dots, c_L}^2] = D_{m;c_1, \dots, c_L}(Z;\widehat{\theta}_{m;c_1, \dots, c_L})$. Alternatively our decoders $D_{m;c_1, \dots, c_L}(Z;\widehat{\theta}_{m;c_1, \dots, c_L})$ can be expert models, or they can depend upon only $Z$, i.e. $p(X_m \vert Z, \mathbf{N}_{c_1, \dots, c_L}) = p(X_m \vert Z)$.

\subsection{Directed acyclic graph and joint distribution of nodes} \label{sec:joint-def-A}
Given our data, we assume that hidden features - which are discovered by the encoder and decoder of our architecture - admit a causal structure. In particular, we assume a directed acyclic graph $G$ with $L$ nodes, where each feature node $\mathbf{N}_\ell$ is a categorical random variable representing a hidden feature with $C_\ell$ outcomes. Furthermore, the joint distribution of $\mathbf{N}$ factors according to $G$: 
\begin{equation} \label{eq:markov}
p({\mathbf{N}}) = p({\mathbf{N}_1,\dots,\mathbf{N}_L}) = \prod_{\ell=1}^L p({\mathbf{N}_\ell \vert \text{Pa}(\mathbf{N}_\ell)}).
\end{equation}
One challenge of causal representation learning is determining how to efficiently learn the DAG edges, which are described by $\text{Pa}(\mathbf{N}_\ell)$.
Building off of concepts from Hodge theory~\cite{jiang2011statistical, lim2020hodge}, we pose our regularized edge indicator function as the graph gradient ($\mathcal{G}$) on a set of nodes. By using the graph gradient, we are guaranteeing that our edge indicator function is curl-free, and consequently defines a complete DAG; we introduce sparsity in the complete DAG through nonnegative weightings ($B$) of edges.

Explicitly, given a set of nodes, each node $\mathbf{N}_\ell$ is assigned a trainable score $\xi_\ell$. We denote the vector of trainable node scores as $\vec{\xi}$. Each potential edge $e_{ij}$ between nodes is assigned a value $F_{ij}$ given by
\begin{equation} \begin{split}
F_{ij} = (B \cdot \mathcal{G}\xi)_{ij} = B_{ij}(\xi_j - \xi_i),
\end{split} \end{equation}
where $\mathcal{G}$ is the graph gradient operator, and $B$ is a trainable nonnegative metric diagonal tensor inducing sparsity in $G$.
We use these edge values to give a regularized edge indicator function
\begin{equation} \label{eq:e}
E_{ij} = \text{ReLU} \left( \tanh \left( \frac{1}{\beta}F_{ij} \right) \right),
\end{equation}
where the scalar $\beta > 0$ is a temperature parameter that controls the sharpness of the regularization of the indicator function. We use this temperature to control how easily the DAG can update during training, as described in Section~\ref{sec:practical}.
With this formulation, we assign edges in our DAG via the rule 
\begin{equation} \label{eq:dag-assignment}
 \mathbf{N}_i \subseteq \text{Pa}(\mathbf{N}_j) \qquad 
 \Longleftrightarrow \qquad \lim_{\beta \rightarrow 0} E_{ij} = 1 \qquad \Longleftrightarrow \qquad B_{ij} \ne 0 \text{ and } \xi_i < \xi_j.
\end{equation}

Our DAG parametrization does indeed guarantee a DAG and is flexible enough to learn any possible DAG. Formal proofs are given in Appendix~\ref{app:dag}.

\begin{theorem}
Let $A$ be the adjacency matrix of a directed graph $G$. Then $G$ is a DAG if and only if $A=\lim_{\beta \rightarrow 0} E$ for some matrix $E$ with entries given by Equation~\ref{eq:e}. 
\end{theorem}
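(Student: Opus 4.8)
The whole statement reduces to understanding the entrywise limit $A_{ij} = \lim_{\beta\to 0^+} E_{ij}$, so the first step is to compute it. Since $E_{ij} = \mathrm{ReLU}(\tanh(\tfrac1\beta F_{ij}))$ with $F_{ij} = B_{ij}(\xi_j-\xi_i)$, I would observe that as $\beta\to 0^+$ the quantity $\tfrac1\beta F_{ij}$ diverges to $+\infty$ if $F_{ij}>0$, to $-\infty$ if $F_{ij}<0$, and is identically $0$ if $F_{ij}=0$; hence $\tanh(\tfrac1\beta F_{ij})\to\operatorname{sign}(F_{ij})\in\{-1,0,1\}$, and applying $\mathrm{ReLU}$ gives $A_{ij} = \lim_{\beta\to 0^+}E_{ij} = \mathbbm{1}[F_{ij}>0]$. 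Because $B$ is nonnegative, $F_{ij}>0 \iff B_{ij}>0$ and $\xi_i<\xi_j$, which is exactly the edge-assignment rule of Equation~\ref{eq:dag-assignment}. In particular every such limit matrix is a $0/1$ matrix, so it legitimately is the adjacency matrix of some directed graph, and an edge $i\to j$ in that graph forces $\xi_i<\xi_j$.

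For the ``if'' direction, suppose $A=\lim_{\beta\to 0^+}E$ for some node scores $\vec\xi$ and weights $B$. By the computation above, every directed edge $i\to j$ of $G$ satisfies $\xi_i<\xi_j$. If $G$ contained a directed cycle $i_1\to i_2\to\cdots\to i_k\to i_1$, chaining these strict inequalities would yield $\xi_{i_1}<\xi_{i_1}$, a contradiction; hence $G$ is acyclic and is a DAG.

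For the ``only if'' direction, suppose $G$ is a DAG. Fix a topological ordering of its $L$ vertices and let $\xi_\ell$ be the rank of vertex $\ell$ in that ordering, so the entries of $\vec\xi$ are distinct and $\xi_i<\xi_j$ whenever there is a directed path from $i$ to $j$ in $G$ (in particular whenever $i\to j$ is an edge). Define $B_{ij}=1$ if $i\to j$ is an edge of $G$ and $B_{ij}=0$ otherwise; this $B$ is nonnegative as required. Then $F_{ij}>0$ iff $i\to j$ is an edge: for edges $B_{ij}=1>0$ and $\xi_i<\xi_j$, while for non-edges $B_{ij}=0$ makes $F_{ij}=0$. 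By the first step, $\lim_{\beta\to 0^+}E_{ij}=\mathbbm{1}[\,i\to j\in G\,]=A_{ij}$, completing the proof.

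I do not expect a deep obstacle here; the two points needing care are the entrywise limit at the degenerate value $F_{ij}=0$ (where $\tanh$ and $\mathrm{ReLU}$ both return $0$, so the edge is correctly absent), and, in the forward construction, the necessity of zeroing $B$ on non-edges: a topological order is only a linear refinement of the reachability partial order, so it will generically place $\xi_i<\xi_j$ for many non-adjacent pairs, and it is $B$ — not $\vec\xi$ — that must select the true edge set. Once these are handled, both implications follow from the sign analysis of $F_{ij}$ together with the standard equivalence between acyclicity and the existence of a topological ordering.
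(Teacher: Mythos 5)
Your proposal is correct, and both directions hold, but you take a genuinely different route from the paper. For the direction ``$A=\lim_{\beta\to 0}E$ implies $G$ is a DAG,'' the paper does not argue via chained inequalities along a hypothetical cycle; instead it sorts the node scores $\xi$ with a permutation matrix $Q$, proves (Lemma~\ref{lemma:upper-triangular}) that $QEQ^T$ is strictly upper triangular, and then invokes the walk-counting characterization $\sum_{k\ge 1}\trace(A^k)=0$ (Corollary~\ref{cor:dag-trace-Ak}) to conclude acyclicity in Lemma~\ref{lemma:isaDAG}. Your sign analysis of $F_{ij}$ plus the ``a cycle would force $\xi_{i_1}<\xi_{i_1}$'' contradiction is more elementary and entirely sufficient for the theorem as stated; what the paper's heavier machinery buys is reuse: the upper-triangularization lemma also yields Corollary~\ref{cor:order-traversal} (the permutation $\sigma$ gives the traversal order exploited by Algorithm~\ref{alg:algorithm-A}), and the trace-of-powers criterion ties the construction to the continuous acyclicity characterizations in the DAG-learning literature. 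For the converse, your explicit construction (ranks of a topological order as $\xi$, $B_{ij}=1$ on edges and $0$ otherwise) is essentially the idea behind Proposition~\ref{prop:allDAGs}, but you spell it out more carefully than the paper's two-sentence sketch --- in particular your observation that it is $B$, not $\vec{\xi}$, that must kill the non-edges compatible with the ordering is exactly the point the paper leaves implicit. You also handle the degenerate case $F_{ij}=0$ explicitly, which the paper glosses over; no gap in your argument.
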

\begin{proof}
See Lemma~\ref{lemma:isaDAG} and Proposition~\ref{prop:allDAGs} in Appendix~\ref{app:dag}.
\end{proof}

For a DAG $G$ parameterized with the edge scores in Equation ~\ref{eq:e}, we need to compute the joint probability distribution on the nodes $\mathbf{N}$, as given by the Markov factorization property (Equation ~\ref{eq:markov}). We now proceed to describe our representation of each of the terms in this factorization, i.e. for the probability distribution at each node $\vec{\pi}_\ell = p(\mathbf{N}_\ell \vert \text{Pa}(\mathbf{N}_\ell)).$ However, the direction of the edge dependencies in the DAG $G$ may change during training, and as a result the number of causal factors that are parents of a given node (i.e. $\text{Pa}(\mathbf{N}_\ell)$) may change as well. We therefore build, for each node $\ell$, a parameterization of these probabilities that allows for any subset of nodes to be parents via a trainable tensor $\mathbf{W}^\ell$; we downselect which nodes are parents using the DAG edge indicator scores from Equation ~\ref{eq:e} and averaging out those modes that are \emph{not} parents.
For ease of notation, we let $\mathbf{A} = p(\mathbf{N})$.

For each $\ell=1,\dots,L$, we define $\mathbf{W}^\ell$ to be a nonnegative rank-$L$ tensor of size $C_1 \times \dots \times C_L$, constrained so that for any $c_1,\dots,c_{\ell-1},c_{\ell+1},\dots,c_L$,
\begin{equation*} \sum_{c_\ell = 1}^{C_\ell} W^\ell_{{c_1},\dots,{c_L}} = 1. \end{equation*}
The entries of $\mathbf{W}^\ell$ represent the probabilities 
\begin{equation*} \begin{split}
\mathbf{W}^\ell :&= p( \mathbf{N}_\ell \vert \mathbf{N}_k \text{ for } k \ne \ell) \\
W^\ell_{{c_1},\dots,{c_L}} &= p(\mathbf{N}_\ell = \mathbf{1}_{c_\ell} \vert \mathbf{N}_k = \mathbf{1}_{c_k} \text{ for } k \ne \ell),
\end{split} \end{equation*}
where $\mathbf{1}_{c}$ is a one-hot encoding with the $c^{\text{th}}$ entry set to 1.

With this definition, we can now proceed to describe our downselection algorithm and parameterize the probabilities over the structure of a given DAG. 
If a node $\mathbf{N}_k$ is \emph{not} a parent node of $\mathbf{N}_\ell$, then we remove mode $k$ from $\mathbf{W}^\ell$ by contracting $\mathbf{W}^\ell$ against $\frac{1}{C_k}\mathbf{1}$ (where $\mathbf{1}$ is the vector of all ones) along mode $k$.
This contraction makes $\mathbf{N}_\ell$ independent of $\mathbf{N}_k$ when $\mathbf{N}_k$ is not a parent of $\mathbf{N}_\ell$. If the node $\mathbf{N}_k$ \emph{is} a parent of $\mathbf{N}_\ell$, then we can contract against the realizations of the parent one-hot encodings, since they are already known. We therefore end up with an expression for the categorical distribution on $\mathbf{N}_\ell \vert \text{Pa}(\mathbf{N}_\ell)$ via
\begin{equation} \begin{split}
\vec{\pi}_\ell = p(\mathbf{N}_\ell \vert \text{Pa}(\mathbf{N}_\ell)) &= \mathbf{W}^\ell \,\, \bar{\times}_1 v_1 \,\, \bar{\times}_2 v_2 \,\,\dots\,\, \bar{\times}_{\ell-1} v_{\ell-1} \,\, \bar{\times}_{\ell+1} v_{\ell+1} \,\,\dots \,\, \bar{\times}_L v_L \\
&= \sum_{\substack{k=1\\ k\ne \ell}}^L \sum_{c_k =1}^{C_k} W^\ell_{c_1,\dots,c_L} v_{k,c_k},\\
\end{split} \end{equation}
where $\bar{\times}_k$ denotes the contractive $n$-mode tensor product against mode $k$ (see \cite{kolda2009tensor, bader2006algorithm} for more details) and
\begin{equation}
v_k = \begin{cases}
\mathbf{N}_k & \text{ if } \mathbf{N}_k \subseteq \text{Pa}(\mathbf{N}_\ell) \\
\frac{1}{C_k}\mathbf{1} & \text{ if } \mathbf{N}_k \subseteq \text{Pa}(\mathbf{N}_\ell)^\mathbf{c} \text{ and } k \ne \ell \\
\end{cases}
\end{equation}
Using the DAG representation from the previous section, these cases can be written as
\begin{equation}
v_k = \begin{cases}
\mathbf{N}_k & \text{ if } E_{k,\ell} = 1 \\
\frac{1}{C_k}\mathbf{1} & \text{ if } E_{k,\ell} \ne 1 \text{ and } k \ne \ell \\
\end{cases}
\end{equation}
or more concisely as
\begin{equation}
v_k = \frac{1}{C_k}\mathbf{1} - E_{k,\ell}\left(\frac{1}{C_k}\mathbf{1}-\mathbf{N}_k\right) \text{ for }k\ne \ell,
\label{eq:DAG-prob-causal}
\end{equation}
which has the benefit of allowing us to handle relaxations of $E$ where $E$ is not necessarily a binary matrix (such as when the temperature $\beta$ is small but not yet sufficiently close to 0). With $v_k$ defined as such, we can write $\vec{\pi}_\ell$ via
\begin{equation} \label{eq:pi-ell}
\vec{\pi}_\ell = \sum_{\substack{k = 1 \\ k \ne \ell}}^L \sum_{c_k = 1}^{C_k} W^\ell_{c_1,\dots,c_L} \left( \frac{1}{C_k}\mathbf{1} - E_{k,\ell}\left(\frac{1}{C_k}\mathbf{1}-\mathbf{N}_k\right) \right)_{c_k}.
\end{equation}

From the tensors $\mathbf{W}^\ell$ and the vectors $\vec{\pi}_\ell$, we can now compute $\mathbf{A}$.

By Corollary \ref{cor:order-traversal}, assume that the categorical variables $\mathbf{N}_1,\dots,\mathbf{N}_L$ are ordered such that $\forall k < \ell$, $\mathbf{N}_k \subset \text{Anc}(\mathbf{N}_\ell)$; if not, we reassign the indices via the permutation $\sigma$. Observe that
\begin{equation} \begin{split}
\mathbf{A}^\ell &:= p(\mathbf{N}_1,\dots,\mathbf{N}_\ell) \\
&= p(\mathbf{N}_\ell \vert \mathbf{N}_1,\dots,\mathbf{N}_{\ell-1}) p( \mathbf{N}_1,\dots,\mathbf{N}_{\ell-1}) \\
&= p(\mathbf{N}_\ell \vert \text{Pa}(\mathbf{N}_\ell)) \, \mathbf{A}^{\ell-1} \\
\end{split} \end{equation}
where $\mathbf{A}^0 = 1$, and where the expression for $p(\mathbf{N}_\ell \vert \text{Pa}(\mathbf{N}_\ell))$ is given by $\vec{\pi}_\ell$ in Equation \eqref{eq:pi-ell}. This inductive process of computing $\mathbf{A} = \mathbf{A}^L$ is given in Algorithm \ref{alg:algorithm-A}. 
\begin{algorithm}
    \begin{algorithmic}
        \REQUIRE $E$ an upper-triangular DAG score matrix
        \REQUIRE $\mathbf{W}^1,\dots,\mathbf{W}^L$ tensors, where $\mathbf{W}^\ell = p(\mathbf{N}_\ell \vert \mathbf{N}_k \text{ for } k \ne \ell) $
        \STATE $\mathbf{A}^{0}$ = 1
        \FOR{$\ell=1$ to $L$}
            \STATE $\mathbf{W}^\ell \leftarrow \texttt{reduce\_mean(}\mathbf{W}^\ell\texttt{, axis=[$\ell+1,\dots,L$], keepdims=False)}$ 
            \FOR{$k=1$ to $\ell-1$  }
                \STATE $\mathbf{W}^\ell \leftarrow E_{\ell, k}\cdot \mathbf{W}^\ell + (1.0-E_{\ell, k})\cdot \texttt{reduce\_mean(}\mathbf{W}^\ell\texttt{, axis=k, keepdims=True)}$
            \ENDFOR
            \STATE $\mathbf{A}^{\ell} \leftarrow \mathbf{W}^\ell_{\dots,:} \odot \mathbf{A}^{\ell-1}$
        \ENDFOR
        \OUTPUT $\mathbf{A} \leftarrow \mathbf{A}^L$ 
    \end{algorithmic}
    \caption{Algorithm for computing joint distribution kernel $\mathbf{A}$     \label{alg:algorithm-A}}
\end{algorithm}

\section{ELBO loss and training}
\subsection{Single sample ELBO}
The ELBO loss for training is identical to that of~\cite{pima}, albeit with different notation and computation of cluster assignment. The full ELBO derivation is in Appendix~\ref{app:elbo}. After dropping constant terms, the single-sample ELBO is:
\begin{equation} \label{eq:abbrev_elbo} \begin{split}
\mathcal{L} &= - \sum_{m=1}^M \log (\widehat{\sigma}_m^2) + \left \lVert \frac{X_m - \widehat{\mu}_m}{\widehat{\sigma}_m}\right \rVert^2 \\
&\qquad + \sum_{j=1}^{J} \log(\sigma_{j}^2) \\
&\qquad + \sum_{c_1=1}^{C_1}\cdots \sum_{c_L=1}^{C_L} \gamma_{c_1,\dots,c_L} \cdot \left[ 2 \log\left(\frac{\mathbf{A}_{c_1,\dots,c_L}}{\gamma_{c_1,\dots,c_L}}\right) -  \sum_{j=1}^{J} \log (\widetilde{\sigma}_{c_1,\dots,c_L;j}^2) + \frac{\sigma_{j}^2}{\widetilde{\sigma}_{c_1,\dots,c_L;j}^2} + \frac{(\mu_{j} - \widetilde{\mu}_{c_1,\dots,c_L;j})^2}{\widetilde{\sigma}_{c_1,\dots,c_L;j}^2}  \right], \\
\end{split} \end{equation}
where $\gamma_{c_1,\dots,c_L}$ is an estimate for the posterior distribution and, following~\cite{jiang2016variational}, is computed by:
\begin{align} \label{eq:gamma-expr} \begin{split}
\gamma :&= q(\mathbf{N} \vert \mathbf{X}) = p(\mathbf{N} \vert Z) = \frac{ p(\mathbf{N}) p(Z \vert \mathbf{N}) }{p(Z)} \\
\gamma_{c_1,\dots,c_L} &= \frac{p(\mathbf{N}_{c_1,\dots,c_L}) p(Z \vert \mathbf{N}_{c_1,\dots,c_L})}{ \sum_{c'_1=1}^{C_1}\cdots \sum_{c'_L=1}^{C_L} p(\mathbf{N}_{c'_1,\dots,c'_L})  p(Z \vert \mathbf{N}_{c'_1,\dots,c'_L}) } \\
&= \frac{ \mathbf{A}_{c_1,\dots,c_L} p(Z \vert \mathbf{N}_{c_1,\dots,c_L}) }{ \sum_{c'_1=1}^{C_1} \cdots \sum_{c'_L=1}^{C_L} \mathbf{A}_{c'_1,\dots,c'_L} p(Z \vert \mathbf{N}_{c'_1\dots c'_L})} \\
\end{split}\end{align}
where we recall $\mathbf{A} := p(\mathbf{N})$ for convenience. Note that $\mathbf{A}$ and $\gamma$ are both tensors with $L$ modes, of size $C_1 \times \dots \times C_L$. The tensor $\mathbf{A}$ can be calculated via Algorithm \ref{alg:algorithm-A} in Section \ref{sec:joint-def-A}. The values of $p(Z \vert \mathbf{N})$ can be computed by sampling from each Gaussian in the Gaussian mixture model. All other values are parameters in our model; Table~\ref{table:distributions} in Appendix~\ref{app:nomen} summarizes the assumed distributions on each term in the architecture, and lists how the variables are computed and updated during training.

\subsection{Training} \label{sec:training}
To train our causal model, we seek to maximize the ELBO $\mathcal{L}$ over the entire dataset. That is, if we use $\mathcal{L}_d$ to denote Equation~\ref{eq:abbrev_elbo} for the $d^{th}$ datapoint, then we want to minimize $-\sum_d \mathcal{L}_d$. 
Throughout training we alternate between (1) updating the neural network, expert model, and DAG parameters via gradient descent and (2) updating the Gaussian mixture centers and variances using block-coordinate maximization, similar to~\cite{pima}. In particular we compute the optimal Gaussian mixture centers and variances by taking the derivative of $-\sum_d\mathcal{L}_d$ with respect to the cluster centers and variances and solving for the global minimizers: 
\begin{align} \label{eq:gmm} \begin{split}
\widetilde{\mu}_{c_1,\dots,c_L}  &= \frac{\sum_d  \mu^{(d)}  \gamma_{c_1, \dots, c_L}^{(d)}}{\sum_d  \gamma_{c_1, \dots, c_L}^{(d)}},   \\
\widetilde{\sigma}_{c_1,\dots,c_L}^2 &=  \frac{\sum_d  (( \mu^{(d)} - \widetilde{\mu}_{c_1,\dots,c_L} )^2 + \sigma^{2(d)})  \gamma_{c_1, \dots, c_L}^{(d)}}{\sum_d  \gamma_{c_1, \dots, c_L}^{(d)}},  
\end{split}
\end{align}
where $d$ indexes the $ d^{th}$ data point, and, in particular $\mu^{(d)}$ and $\sigma^{2(d)}$ are respectively the encoded mean and variance of the $d^{th}$ data point. Our training procedure follows Algorithm~\ref{alg:training}.

\begin{algorithm}
    \begin{algorithmic}
        \INPUT data $\mathbf{x}=\{X_1, \dots, X_M\}$ in batches $\mathcal{B}$
        \STATE Compute $\gamma_{c_1,\dots, c_L}^{(d)}$ for all $x^{(d)} \in \mathbf{x}$ via Equation~\ref{eq:gamma-expr}
        \STATE Compute $\widetilde{\mu}_{c_1, \dots, c_L}$ and $\widetilde{\sigma}_{c_1, \dots, c_L}^2$ via Equation~\ref{eq:gmm}
        \FOR{$i=1$ to $N_{epochs}$}
            \FOR{$b \in \mathcal{B}$  }
                \STATE Perform optimizer update on ELBO
            \ENDFOR
            \STATE Calculate $\gamma_{c_1, \dots, c_L}$ for all $x^{(d)} \in \mathbf{x}$ via Equation~\ref{eq:gamma-expr}
            \STATE Update $\widetilde{\mu}_{c_1, \dots, c_L}$ and $\widetilde{\sigma}_{c_1, \dots, c_L}^2$ via Equation~\ref{eq:gmm}
        \ENDFOR
    \end{algorithmic}
    \caption{Training algorithm for causalPIMA \label{alg:training}}
\end{algorithm}

\subsection{Practical considerations}\label{sec:practical}
We implement several tools for algorithm adaptation and to aid in training. These tools are described here, and the use of these tools in each experiment is detailed in Appendix~\ref{app:arch}.

\paragraph{Pre-training.} 
Before fitting a DAG, we need a reasonable latent embedding. Thus we implemented a pre-training regimen following~\cite{jiang2016variational, Kingma2014auto}. 
As our first step in pre-training, we fix the pre-initialized encoders and initialize the cluster means and variances via Equations~\ref{eq:gamma-expr} and~\ref{eq:gmm}. This initialization has the benefits of providing a good GMM fit for the initial latent embedding, but if the initial latent embedding is poor or undiscriminating, then the initial GMM fitting by these step might not focus on any informative features. Our next step in pre-training is to train the weights and biases of the encoders and decoders via the reconstruction term or by fitting a unit-normal Gaussian variational autoencoder~\cite{Kingma2014auto}. Following this training we find a good initial GMM fit through iterations of Equations~\ref{eq:gamma-expr} and~\ref{eq:gmm}. This has the benefit of finding a good initial embedding from which the block coordinate maximization can recover meaningful features.

\paragraph{Edge indicator function adaptations.} 
We have two optional adaptations to the edge indicator function. The first is to add random noise to the node scores $\mathbf{\xi}$. This noise is included to break free of local minima, and may additionally test edge orientation. Our second adaptation is to anneal $\beta$ during training. In Equation~\ref{eq:e}, $\beta$ serves as a temperature parameter and, as $\beta \to 0$, $E$ approaches a true indicator function. Our annealing implementation is simple, where we specify the initial $\beta$, the final $\beta$, and the update frequency of $\beta$.

\paragraph{Updates on GMM parameters.}
The cluster center and variance updates in Equation~\ref{eq:gmm}, paired with the gamma calculation in Equation~\ref{eq:gamma-expr}, are reminiscent of expectation-maximization. The traditional maximization step would, however, also update the probability of belonging to each cluster ($\mathbf{A}$). While there is not a closed-form expression for an update on $\mathbf{A}$ from our ELBO since they depend on the underlying causal factorization, we alternatively perform extra gradient-descent steps to update $\mathbf{A}$ after each update of the cluster means and variances. Furthermore, we also implemented the option to perform multiple iterations of GMM variable updates per epoch.

\section{Experiments}\label{sec:experiments}
We tested causalPIMA on a synthethic dataset consisting of circle images and a materials dataset consisting of 3D printed lattices.
All architectures and hyperparameters for the experiments can be found in Appendix~\ref{app:arch}.

\begin{figure}[h!]
    \centering
    \includegraphics[width=\textwidth]{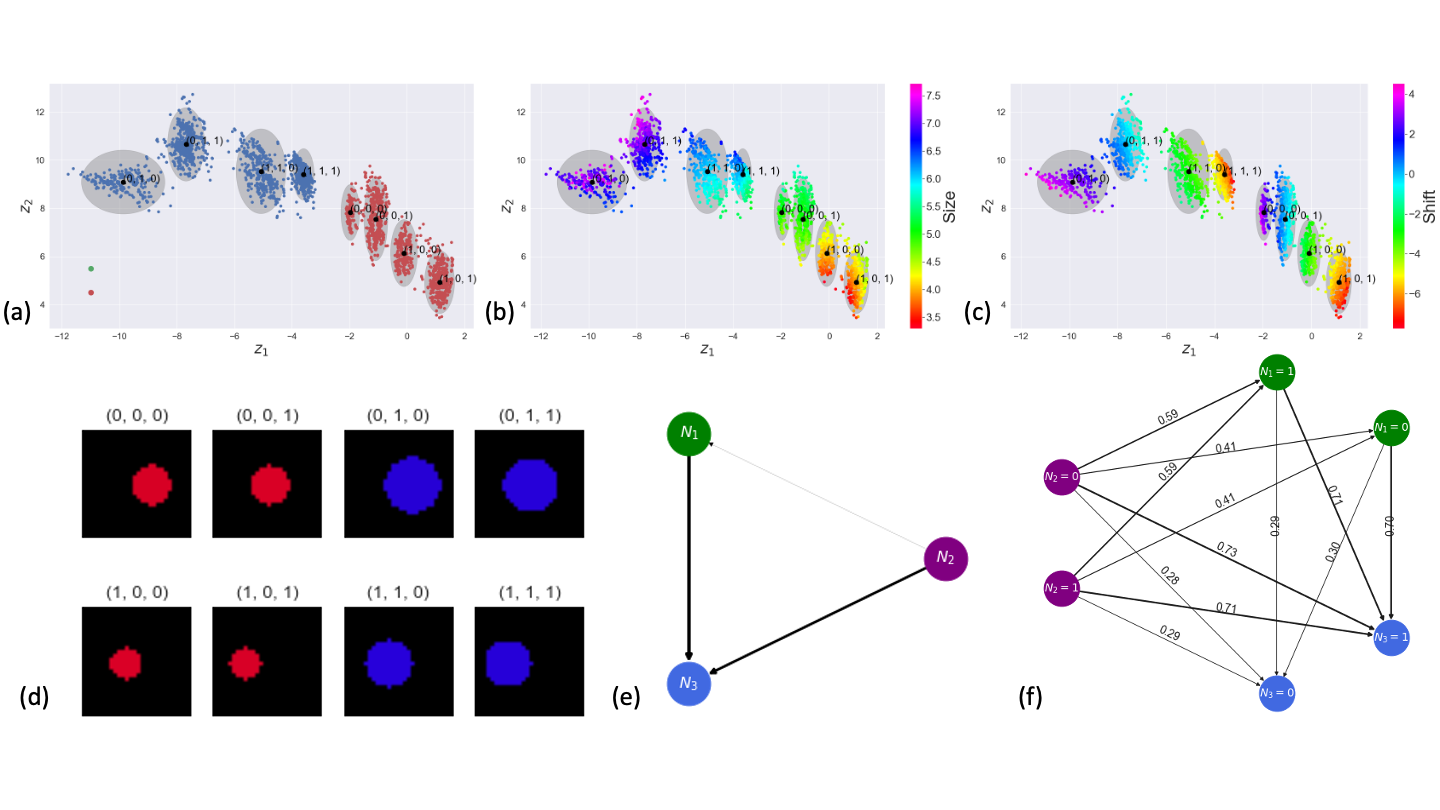}
    \caption{Results from the synthetic circles dataset. Subpanels (a), (b), and (c) show the latent space colored by hue, radius, and shift, respectively. Panel (d) shows the image of the datapoint nearest the cluster mean for each cluster. Subpanel (e) contains the learned DAG where edges are weighted by the probability of each edge. Subpanel (f) gives the probability of feature node $\mathbf{N}_\ell = n$ given parent node $\mathbf{N}_j=m$. By identifying key features in each cluster with its label, we see that $\mathbf{N}_1$ represents circle radius, $\mathbf{N}_2$ represents hue, and $\mathbf{N}_3$ represents shift.     \label{fig:circle_results} }
\end{figure}

\subsection{Circles}\label{ex:circles}

For our first experiment, we generated a synthetic dataset consisting of images of circles with three different features: hue $h$ (red, blue), radius $r$ (Gaussian mixture of big, small), and shift $s$ (Gaussian mixture of left, right). We generated 4096 circles using the decision tree in Figure~\ref{fig:circle_tree} where we purposefully overlapped distributions of $h$, $r$, and $s$ to necessitate the discovery of a DAG describing the generative process. We ran this experiment with three nodes in the DAG, where each node was a binary categorical random variable. The latent space showed disentanglement in the three different features. The learned DAG is in subpanel (e) of Figure~\ref{fig:circle_results}. By comparing cluster labels to features characteristic of each cluster, we see that node $\mathbf{N}_1$ in the DAG corresponds to radius, node $\mathbf{N}_2$  corresponds to hue, and node $\mathbf{N}_3$ corresponds to shift. For example, all clusters with red circles have a 0 in the second entry of their label. Under this identification, the resulting directed acyclic graph demonstrates that radius and hue play a key role in the outcome of shift.

\begin{figure}[h!]
    \centering
    \includegraphics[width=\textwidth]{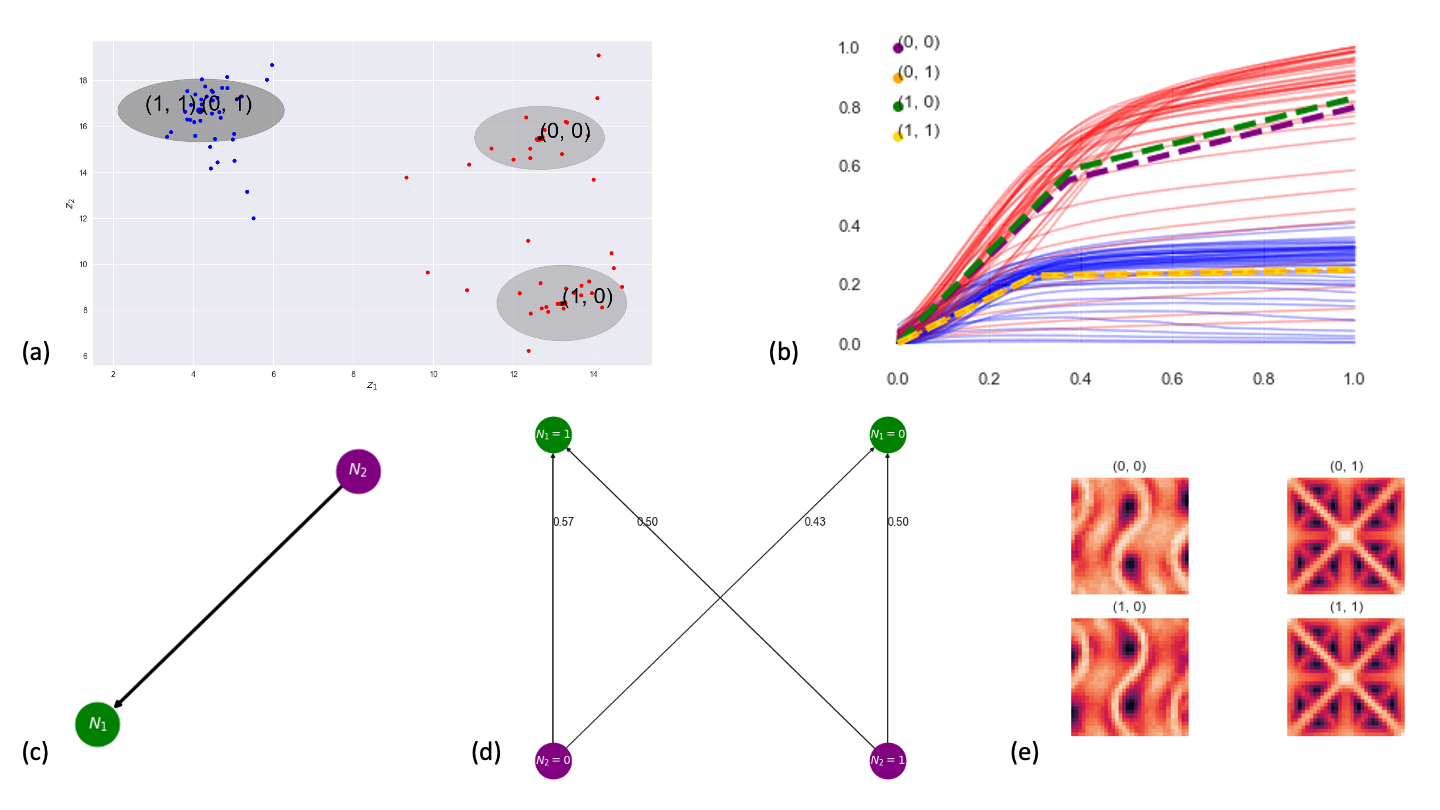}
    \caption{Results from the lattice experiment. Subpanel (a) shows the latent space, where two standard deviations for each cluster are shown as gray ellipses. Points in the latent space are colored by lattice type: gyroid (red), and octect (blue). Subpanel (b) shows the stress-strain curves colored by lattice type (gyroid in red, octect in blue) as well as the expert model for each cluster shown by dashed lines. Subpanel (c) shows the learned DAG, subpanel (d) shows gives the probability of feature node $\mathbf{N}_\ell = n$ given parent node $\mathbf{N}_j=m$. Subpanel (e) shows the mean of each cluster decoded as an image. By comparing the cluster labels with the nodes of the graph, we see that $\mathbf{N}_2$ corresponds to lattice type and influences $\mathbf{N}_1$, which corresponds to the stress-strain curve. This suggests that the mechanical response (stress-strain curve) follows from the microstructure imagery.     \label{fig:lattice_results}}
\end{figure}

\subsection{Lattices} \label{ex:lattices}
Our next experiment uses a dataset of 3D printed lattices~\cite{garland2020deep}. Two different lattice geometries were printed (octet and gyroid) with a total of 91 samples. An image ($X_1$) and a stress-strain curve ($X_2$) produced by a high-thoughput uniaxial compression machine were collected for each printed lattice. The stress/strain curves represent a physics-imbued modality where curves can be modeled via a continuous piecewise linear function.
Consequently, for the stress-strain modality, we used an expert model decoder composed of a two piecewise linear segments. Specifying two binary feature nodes resulted in a latent space organized by lattice type and stress-strain curves. The two clusters consisting of the octet geometry merged, and the corresponding expert models are nearly identical. This result is consistent with distribution of octet stress-strain curves, which has a lower variance than the gyroid stress-strain curves. By comparing cluster labels to features characteristic of each cluster, we see that node $\mathbf{N}_2$ corresponds to lattice type while $\mathbf{N}_1$ corresponds to the stress-strain curve profile.
The learned DAG suggests that the lattice type influences the stress-strain curve.

\section{Conclusion}
Causal disentanglement often relies upon interventional data and underlying model assumptions. For exploratory cases where such information is not available, we introduce a causal disentanglement algorithm that does not make any structural assumptions and does not rely on interventional data. Furthermore, this algorithm is capable of handling multiple modalities and underlying physics to encourage data-driven disentanglement of data with a causal interpretation. We demonstrate the efficacy of our algorithm on synthetic and real data and were able to achieve interpretable causal relationships. These results show that meaningful causal disentanglement is possible, even in purely exploratory settings. Future work will include methods for optionally introducing interventions and structural causal models.

\section*{Acknowledgements}
This article has been co-authored by employees of National Technology \& Engineering Solutions of Sandia, LLC under Contract No. DE-NA0003525 with the U.S. Department of Energy (DOE). The employees are solely responsible for its contents. Any subjective views or opinions that might be expressed in the paper do not necessarily represent the views of the U.S. Department of Energy or the United States Government. SAND number: SAND2023-11515O

\bibliography{first_draft}
\bibliographystyle{icml2022}

\appendix

\section{Nomenclature and Representations} \label{app:nomen}
We include tables outlining our notation choices (Table~\ref{table:notation}) as well as the various distributions appearing in our algorithm (Table~\ref{table:distributions}). Furthermore we include Figure~\ref{fig:latent-space-sketch} to illustrate the connection between the trained DAG and the clusters in the latent space.

\begin{table}[htbp!]
\centering
\begin{tabular}{lll}
\hline
Notation & Meaning & Range \\
\hline
$\mathbf{X}$ & all modalities & \\
$X_m$ & data from $m^\text{th}$ modality & $m=1,\dots,M$  \\
\hline
$\mathbf{N}$ & all features &\\
$\mathbf{N}_\ell$ & one-hot vector of for the $\ell^\text{th}$ feature space & $\ell=1,\dots,L$ \\
$\mathbf{N}_{c_1,\dots,c_L}$ & one combination of features & $c_\ell = 1,\dots,C_L$\\
\hline
Z & latent space representation &  \\
\hline
\end{tabular}
\caption{List of notation for causalPIMA derivation. \label{table:notation}}
\end{table}

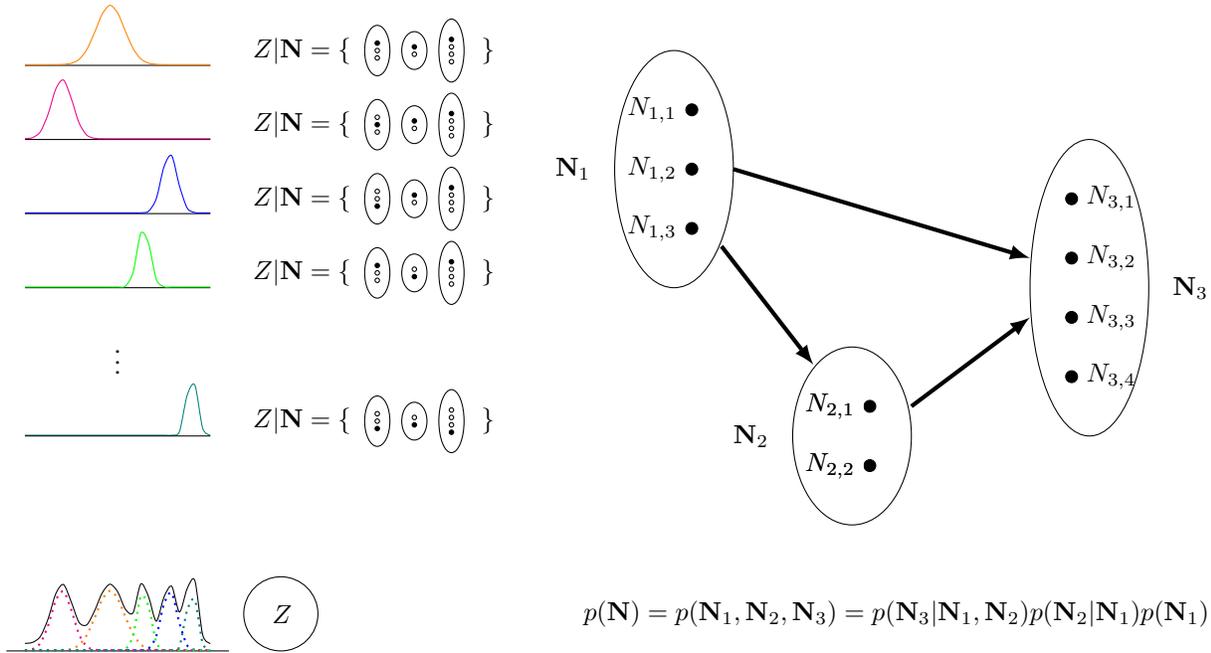
\begin{figure}[htbp!]
	\begin{center}
	\resizebox{0.95\textwidth}{!}{
	\begin{tikzpicture}

    \begin{scope}[scale=0.8, shift={(22.5,0)}]

           \foreach \i [count=\l from 1] in {1,...,3} \filldraw (-4.7,2-\l) circle (0.1);
           \foreach \i [count=\l from 1] in {1,...,2} \filldraw (-1.7,-3-\l) circle (0.1);
           \foreach \i [count=\l from 1] in {1,...,4} \filldraw (1.7,0.5-\l) circle (0.1);
           \foreach \i [count=\l from 1] in {1,...,3} \node[left] at (-4.8,2-\l) {${N_{1,\l}}$};
           \foreach \i [count=\l from 1] in {1,...,2} \node[left, fill=white, fill opacity=0.8] at (-1.8,-3-\l) {${N_{2,\l}}$};
           \foreach \i [count=\l from 1] in {1,...,2} \node[left] at (-1.8,-3-\l) {${N_{2,\l}}$};
           \foreach \i [count=\l from 1] in {1,...,4} \node[right] at (1.8,0.5-\l) {${N_{3,\l}}$};

           \draw (-5,0) ellipse (1cm and 2cm);
           \draw (-2,-4.5) ellipse (1cm and 1.5cm);
           \draw (2,-2) ellipse (1cm and 2.5cm);

           \draw[-latex, ultra thick] (-4.2,-1.3) -- (-2.65,-3.3);
           \draw[-latex, ultra thick] (-4,0) -- (1,-1.5);
           \draw[-latex, ultra thick] (-1,-4) -- (1,-2.5);

           \node[left=0.2cm] at (-6,0) {$\mathbf{N}_1$};
           \node[left=0.2cm] at (-3,-4.5) {$\mathbf{N}_2$};
           \node[right=0.2cm] at (3,-2) {$\mathbf{N}_3$};

    \end{scope}

           \foreach \i/\color [count=\l from 1] in {1/black,2/black,3/black,4/black,5/white,6/black} \draw[color=\color] (10,2.6-\l) ellipse (0.17cm and 0.34cm);
           \foreach \i/\color [count=\l from 1] in {1/black,2/black,3/black,4/black,5/white,6/black} \draw[color=\color] (10.5,2.6-\l) ellipse (0.17cm and 0.255cm);
           \foreach \i/\color [count=\l from 1] in {1/black,2/black,3/black,4/black,5/white,6/black} \draw[color=\color] (11,2.6-\l) ellipse (0.17cm and 0.425cm);

           \foreach \i/\color/\fc [count=\l from 1] in {1/black/black,2/black/white,3/black/white,4/black/black,5/white/white,6/black/white} \draw[color=\color,fill=\fc] (10,2.7-\l) circle (0.03cm);
           \foreach \i/\color/\fc [count=\l from 1] in {1/black/white,2/black/black,3/black/white,4/black/white,5/white/white,6/black/white} \draw[color=\color,fill=\fc] (10,2.6-\l) circle (0.03cm);
           \foreach \i/\color/\fc [count=\l from 1] in {1/black/white,2/black/white,3/black/black,4/black/white,5/white/white,6/black/black} \draw[color=\color,fill=\fc] (10,2.5-\l) circle (0.03cm);

           \foreach \i/\color/\fc [count=\l from 1] in {1/black/black,2/black/black,3/black/black,4/black/white,5/white/white,6/black/white} \draw[color=\color,fill=\fc] (10.5,2.65-\l) circle (0.03cm);
           \foreach \i/\color/\fc [count=\l from 1] in {1/black/white,2/black/white,3/black/white,4/black/black,5/white/white,6/black/black} \draw[color=\color,fill=\fc] (10.5,2.55-\l) circle (0.03cm);

           \foreach \i/\color/\fc [count=\l from 1] in {1/black/black,2/black/black,3/black/black,4/black/black,5/white/white,6/black/white} \draw[color=\color,fill=\fc] (11,2.75-\l) circle (0.03cm);
           \foreach \i/\color/\fc [count=\l from 1] in {1/black/white,2/black/white,3/black/white,4/black/white,5/white/white,6/black/white} \draw[color=\color,fill=\fc] (11,2.65-\l) circle (0.03cm);
           \foreach \i/\color/\fc [count=\l from 1] in {1/black/white,2/black/white,3/black/white,4/black/white,5/white/white,6/black/white} \draw[color=\color,fill=\fc] (11,2.55-\l) circle (0.03cm);
           \foreach \i/\color/\fc [count=\l from 1] in {1/black/white,2/black/white,3/black/white,4/black/white,5/white/white,6/black/black} \draw[color=\color,fill=\fc] (11,2.45-\l) circle (0.03cm);

           \node[right] at (8.2,1.6) {$Z \vert \mathbf{N} = \{ \qquad\qquad\quad \}$};
           \node[right] at (8.2,0.6) {$Z \vert \mathbf{N} = \{ \qquad\qquad\quad \}$};
           \node[right] at (8.2,-0.4) {$Z \vert \mathbf{N} = \{ \qquad\qquad\quad \}$};
           \node[right] at (8.2,-1.4) {$Z \vert \mathbf{N} = \{ \qquad\qquad\quad \}$};
           \node[right] at (8.2,-3.4) {$Z \vert \mathbf{N} = \{ \qquad\qquad\quad \}$};
            
           \foreach \i [count=\l from 1] in {1,...,4} \draw (5.25,2.4-\l)--(7.75,2.4-\l) ;
           \foreach \mu/\sigma/\color [count=\l from 1] in {6.4/0.27/orange, 5.75/0.2/magenta, 7.2/0.15/blue, 6.85/0.12/green, 6.6/0.1/white, 7.5/0.10/teal} \draw[color=\color,domain=5.25:7.75,smooth,variable=\t] plot ({\t},{0.8*exp(-1.0*((\t-\mu)/\sigma)^2) + 2.41-\l});
           \node at (6.5, -2.5) {$\vdots$};
           \draw (5.25,-3.6)--(7.75,-3.6);

           \draw (5,-6.5)--(8,-6.5);
           \foreach \mu/\sigma/\color [count=\l from 1] in {6.4/0.27/orange, 5.75/0.2/magenta, 7.2/0.15/blue, 6.85/0.12/green, 7.5/0.1/teal} \draw[dotted,thick,color=\color,domain=5.25:7.75,smooth,variable=\t] plot ({\t},{0.8*exp(-1.0*((\t-\mu)/\sigma)^2) -6.49});
           \draw[domain=5.25:7.75,smooth,variable=\t] plot ({\t},{0.8*(exp(-1.0*((\t-6.4)/0.27)^2)+exp(-1.0*((\t-5.75)/0.2)^2)+exp(-1.0*((\t-7.2)/0.15)^2)+exp(-1.0*((\t-6.85)/0.12)^2))+exp(-1.0*((\t-7.5)/0.10)^2)) - 6.4});

           \draw (8.7,-6) circle (0.5) ;
           \node[right] at (8.4,-6) {$\,Z$};
           \node at (17,-6) {$p(\mathbf{N}) = p(\mathbf{N}_1,\mathbf{N}_2,\mathbf{N}_3) = p(\mathbf{N}_{3} \vert \mathbf{N}_1, \mathbf{N}_2) p(\mathbf{N}_2 \vert \mathbf{N}_1) p(\mathbf{N}_1)$};

    \end{tikzpicture}
    }
 \end{center}
 \caption{Sketch of latent space and accompanying causal feature map. The latent space $Z$ is a Gaussian mixture model, where each individual Gaussian corresponds to a phenotype of features $\mathbf{N} = \{\mathbf{N}_1,\mathbf{N}_2,\mathbf{N}_3\}$ that obey the depicted causal relationship. \label{fig:latent-space-sketch}}
\end{figure}
\begin{table}[htbp!]
\centering
\begin{tabular}{llrll}
\hline
Distribution & Priors && Computation & Update \\
\hline
$p(X_m \vert Z, \mathbf{N})$ & $\mathcal{N}(\widehat{\mu}_m, \widehat{\sigma}_m^2 \mathbf{I}) $ & $[\widehat{\mu}_m, \widehat{\sigma}_m] $&$= D_m(Z; \widehat{\theta}_m)$ & trained $\widehat{\theta}_m$  \\[0.5em]
  \multirow{2}{*}{$p(Z \vert \mathbf{N}_{c_1,\dots,c_L})$} 
& \multirow{2}{*}{$\mathcal{N}(\widetilde{\mu}_{c_1,\dots,c_L}, \widetilde{\sigma}^2_{c_1,\dots,c_L}\mathbf{I})$} 
& $\displaystyle \widetilde{\mu}_{c_1,\dots,c_L} $&$= \frac{\sum_d \mu^{(d)} \gamma_{c_1,\dots,c_L}^{(d)}}{\sum_d \gamma_{c_1,\dots,c_L}^{(d)}}$ & computed \\[0.5em]  
& & $\displaystyle \widetilde{\sigma}_{c_1,\dots,c_L}^2 $&$= \frac{\sum_d((\mu^{(d)}-\widetilde{\mu}_{c_1,\dots,c_L})^2+\sigma^{2(d)})\gamma_{c_1,\dots,c_L}^{(d)}}{\sum_d \gamma_{c_1,\dots,c_L}^{(d)}}$ &  computed \\[0.5em]
$p(\mathbf{N}_\ell \vert \Pa(\mathbf{N}_\ell)) $ & $\Cat(\vec{\pi}_\ell) $ & $\vec{\pi}_\ell $&$= $ (Equation \eqref{eq:pi-ell}) & trained $\mathbf{W}^\ell, E$ \\[0.5em]
$q(Z \vert X_m)$ & $\mathcal{N}(\mu_{m}, \sigma^2_{m} \mathbf{I})$ & $[\mu_{m}, \sigma_{m}] $&$= F_{m}(X_m ; \theta_m)$ & trained $\theta_m$   \\[0.5em]
\multirow{2}{*}{$q(Z \vert \mathbf{X})$} & \multirow{2}{*}{$\mathcal{N}(\mu, \sigma^2 \mathbf{I})$} & $ \displaystyle \sigma^{2} $&$= \left( \sum_{m=1}^M \sigma_{m}^{-2} \right)^{-1}$ & computed \\[0.5em]
& & $ \displaystyle \mu $&$= \sigma^2 \sum_{m=1}^M \frac{\mu_{m}}{\sigma^2_{m}}$ & computed \\
\hline
\end{tabular}
\caption{Choices of distributions. \label{table:distributions}}
\end{table}

\section{Architectures, hyperparameters, and implementation} \label{app:arch}

We include details on architectures and implementation for each experiment. Hyperparameters for each experiment are in Table~\ref{table:hyperparams}.

\begin{table}[ht]
    \centering        
    \begin{tabular}{|c|c|c|c|c|}
        \hline
        & learning rate & encoding dim size & DAG node sizes & pre-training\\
        \hline
        Experiment~\ref{ex:circles} (circles) & 1e-6 & 2 & [2,2,2]  &  yes \\ 
        \hline
        Experiment~\ref{ex:lattices} (lattices) & 1.25e-5 & 2 & [2,2] & yes\\ 
        \hline

    \end{tabular}
    \caption{Hyperparameters for each experiment. \label{table:hyperparams}}
\end{table}

\subsection{Experiment~\ref{ex:circles} (Circles)}
\begin{figure}[ht]
\centering
\resizebox{\textwidth}{!}{%
\begin{tikzpicture}[align=center]
\node [draw, style A] (x) at (1.0,0) {$x$};
\node [draw, style A] (r) at (3,2.5) {$h \sim $ \color{red}{Red}};
\node [draw, style A] (b) at (3,-2.5) {$h \sim $ \color{blue}{Blue}};
\node [draw, style B] (r1) at (7,3.5) {$r\sim\calN(4,0.25)$};
\node [draw, style B] (r2) at (7,1.5) {$r\sim\calN(5,0.25)$};
\node [draw, style B] (b2) at (7,-3.5) {$r\sim\calN(7,0.25)$};
\node [draw, style B] (b1) at (7,-1.5) {$r\sim\calN(6,0.25)$};
\node[draw, style B] (s1) at (12,4) {$s\sim\calN(-6,0.5)$};
\node[draw, style B] (s2) at (12,3) {$s\sim\calN(-3,0.5)$};
\node[draw, style B] (s3) at (12,2) {$s\sim\calN(0,0.5)$};
\node[draw, style B] (s4) at (12,1) {$s\sim\calN(3,0.5)$};
\node[draw, style B] (t1) at (12,-1) {$s\sim\calN(-6,0.5)$};
\node[draw, style B] (t2) at (12,-2) {$s\sim\calN(-3,0.5)$};
\node[draw, style B] (t3) at (12,-3) {$s\sim\calN(0,0.5)$};
\node[draw, style B] (t4) at (12,-4) {$s\sim\calN(3,0.5)$};

\draw[-stealth] (x) -- (r) node [midway, above left]{0.5};
\draw[-stealth] (x) -- (b) node [midway, below left]{0.5};
\draw[-stealth] (r) -- (r1) node [midway, above left]{0.6};
\draw[-stealth] (r) -- (r2) node [midway, below left]{0.4};
\draw[-stealth] (b) -- (b1) node [midway, above left]{0.6};
\draw[-stealth] (b) -- (b2) node [midway, below left]{0.4};
\draw[-stealth] (r1) -- (s1) node [midway, above left]{0.7};
\draw[-stealth] (r1) -- (s2) node [midway, below left]{0.3};
\draw[-stealth] (r2) -- (s3) node [midway, above left]{0.7};
\draw[-stealth] (r2) -- (s4) node [midway, below left]{0.3};
\draw[-stealth] (b1) -- (t1) node [midway, above left]{0.7};
\draw[-stealth] (b1) -- (t2) node [midway, below left]{0.3};
\draw[-stealth] (b2) -- (t3) node [midway, above left]{0.7};
\draw[-stealth] (b2) -- (t4) node [midway, below left]{0.3};
\node[inner sep=0pt] at (20,0)
    {\includegraphics[width=0.65\textwidth]{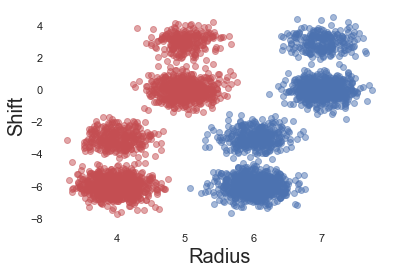}};
\node at (1,-4)  {\LARGE{(a)}};
\node at (16,-4) {\LARGE{(b)}}; 
\end{tikzpicture}
}
\caption{Generative information for synthetic circles dataset. Subpanel (a) contains the probability tree used to generate circle images. Edges of the tree are labeled by the probability of each feature given the previous feature(s). Subpanel (b) shows the distribution of the circle images, with hue represented by the color of each data point. \label{fig:circle_tree}}
\end{figure}

Our circles experiment consists of 4096 images of circles of size $28\times28\times3$. Our neural network architectures for this experiment were simple multilayer perceptrons (MLPs). The encoder for circles first flattens the image and then is an MLP consisting of five linear layers (with respective sizes 128, 64, 32, 16, and $2\times encoding\_dim$) with  ReLU activations between each layer. The final output has size $2\times encoding\_dim$ as it represents the the mean and standard deviation of the input in the latent space. The decoder for circles is also an MLP with five linear layers with ReLU activation between the layers. The respective linear layer sizes are 16, 32, 64, 128, and 2,352. The final layer is followed by a reshape into size $28\times 28\times 3$.

\subsection{Experiment~\ref{ex:lattices} (Lattices)}
The lattice dataset consists of 91 lattice samples where each sample contains an image and a stress-strain curve. Our data preparation follows the steps in~\cite{pima}, which we include here for completeness. In particular, the stress-strain curves were downsampled to an array of length 100 and normalized to have values in [0,1]. The lattice images were cropped and subsampled into images of size $32\times 32$ and standardized so each image had zero mean and unit variance over pixel intensity values. The dataset was further augmented by flipping images along each axis. We use an 81\%/9\%/10\% train/val/test split of the data.

Our neural network architectures for this experiment also follow those in~\cite{pima}, but we include the details here for completeness. We use relatively small convolutional encoders and decoders for the image modality. The image modality encoder consists of two 2D convolutional layers with 32 and 64 channels respectively, each with $3\times3$ kernels. We use the exponential linear unit (ELU) activation function as well as batch normalization after each convolutional layer, then pass the output to a fully connected layer of size $encoding\_dim \times 2$ to enable the representation of the mean and variances of each embedded point. The image decoder begins with a fully connected layer of appropriate size to be reshaped into 32 channels of 2D arrays, with each dimension having a length $\frac{1}{4}$ of the length of the number of pixels per side of the original image. We pass the reshaped output of the initial dense layer through a series of three deconvolution layers with 64, 32, and 1 channel, respectively, each with a kernel of size 3. The first two deconvolution layers use a stride of 3 and a ReLU activation function. The final deconvolution layer uses a stride of 1. No padding is used to retain the input shape while traversing these layers.

The stress-strain curve modality is treated as an expert modality as piecewise linear functions can capture the import aspects of a stress-strain curve. The encoder for the stress-strain curves is identical to the image encoder architecture, except we use 1D convolutions with 8 and 16 channels respectively in place of the 2D convolutional layers. The decoder is modeled as a continuous piecewise linear function consisting of two pieces. The trainable parameters for this decoder are inflection point and the slope of each linear piece.

\section{DAG Parameterization}
\label{app:dag}

We provide proofs showing that our construction guarantees a DAG, and that our parametrization can recover any DAG. Some of the proofs contain elements that are similar to those found in Appendix A of \cite{zheng2018dags}.

\begin{lemma}
Let $G = (\mathcal{V},\mathcal{E})$ be a graph with adjacency matrix $A$. Then, for any positive integer $k \ge 1$, $A^k_{ij}$ is the number of walks of length $k$ from $v_i$ to $v_j$.
\end{lemma}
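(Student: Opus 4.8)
The final statement is the classic combinatorial fact that powers of the adjacency matrix count walks. The plan is to prove this by induction on $k$, the length of the walk, using the definition of matrix multiplication as the engine that stitches together a walk of length $k-1$ with a single edge.

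\textbf{Setup and base case.} First I would fix notation: index the vertex set $\mathcal{V} = \{v_1, \dots, v_n\}$, and let $A$ be the $n \times n$ adjacency matrix with $A_{ij} = 1$ if $(v_i, v_j) \in \mathcal{E}$ and $A_{ij} = 0$ otherwise. Define a walk of length $k$ from $v_i$ to $v_j$ to be a sequence of vertices $v_i = u_0, u_1, \dots, u_k = v_j$ such that $(u_{t-1}, u_t) \in \mathcal{E}$ for every $t = 1, \dots, k$. For the base case $k = 1$: a walk of length $1$ from $v_i$ to $v_j$ is exactly an edge $(v_i, v_j)$, so the number of such walks is $1$ if $(v_i,v_j) \in \mathcal{E}$ and $0$ otherwise, which is precisely $A_{ij} = A^1_{ij}$.

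\textbf{Inductive step.} Assume the claim holds for some $k \ge 1$, i.e., $A^k_{i\ell}$ is the number of walks of length $k$ from $v_i$ to $v_\ell$, for all $i, \ell$. Consider walks of length $k+1$ from $v_i$ to $v_j$. Any such walk $v_i = u_0, u_1, \dots, u_k, u_{k+1} = v_j$ decomposes uniquely into its first $k$ steps — a walk of length $k$ from $v_i$ to the intermediate vertex $u_k$ — followed by a final edge $(u_k, v_j)$. Conversely, any walk of length $k$ from $v_i$ to some vertex $v_\ell$ together with an edge $(v_\ell, v_j)$ concatenates to a walk of length $k+1$ from $v_i$ to $v_j$. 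Hence the number of walks of length $k+1$ from $v_i$ to $v_j$ equals $\sum_{\ell=1}^{n} (\text{walks of length } k \text{ from } v_i \text{ to } v_\ell) \cdot (\text{edges from } v_\ell \text{ to } v_j) = \sum_{\ell=1}^{n} A^k_{i\ell} A_{\ell j}$, using the inductive hypothesis and the definition of $A$. But this sum is exactly the $(i,j)$ entry of the matrix product $A^k A = A^{k+1}$, completing the induction.

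\textbf{Remarks on difficulty.} There is no real obstacle here; the only thing to be careful about is making the bijection in the inductive step airtight — namely that the decomposition of a length-$(k+1)$ walk into a length-$k$ prefix plus a terminal edge is a genuine bijection onto the set of (length-$k$ walk, compatible edge) pairs, so that counting is preserved exactly rather than merely up to inequality. One should also note that the argument uses nothing about acyclicity or directedness beyond the bookkeeping already built into $A$ (in the directed case $A$ need not be symmetric, and the proof goes through verbatim), so the lemma applies in the generality in which it is stated. This lemma will presumably be used subsequently to characterize acyclicity via vanishing of $\operatorname{trace}(A^k)$ for $k \ge 1$, i.e., the absence of closed walks.
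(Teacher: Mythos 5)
Your proof is correct and follows essentially the same route as the paper's: induction on the walk length, with the inductive step decomposing a walk into a shorter walk to an intermediate vertex followed by one more edge, which is exactly the sum defining the matrix product. Your writeup is in fact slightly cleaner, since you make the base case and the decomposition bijection explicit without needing the paper's aside about self-loops.
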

\begin{proof}
We proceed by induction. The base case of $k=1$ is immediate, since $A$ is the adjacency of the matrix for $G$ and $G$ has no self-loops. Suppose the statement holds true for all walks of lengths up to and including length $k-1$. Then, the number of walks from $v_i$ to $v_j$ of length $k$ can be found by taking the number of walks of length $k-1$ from $v_i$ to an intermediate node $v_\ell$, and then completing one more step from $v_\ell$ to $v_j$, i.e.
$$\text{\# of walks} = \sum_{\ell = 1}^{\lvert \mathcal{V} \rvert} A^{k-1}_{i\ell} A_{\ell k} = A^k_{ij}.$$
\end{proof}
\begin{corollary} \label{cor:dag-trace-Ak}
A graph $G = (\mathcal{V}, \mathcal{E})$ with adjacency matrix $A$ has no cycles if and only if $\sum_{k=1}^\infty \text{trace}(A^k) = 0$.
\end{corollary}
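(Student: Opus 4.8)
The plan is to give $\trace(A^k)$ a combinatorial meaning via the preceding Lemma and then lean on the fact that all the quantities involved are nonnegative integers, so that vanishing of an infinite sum is equivalent to vanishing of every summand. First I would observe that, by the Lemma, $A^k_{ii}$ is the number of walks of length $k$ from $v_i$ back to $v_i$, so $\trace(A^k) = \sum_{i} A^k_{ii}$ counts exactly the closed walks of length $k$ in $G$. Since $A$ has entries in $\{0,1\}$, every power $A^k$ has nonnegative integer entries, whence each $\trace(A^k) \ge 0$, and therefore $\sum_{k \ge 1}\trace(A^k) = 0$ if and only if $\trace(A^k) = 0$ for every $k \ge 1$, i.e.\ if and only if $G$ has no closed walk of positive length.

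It then remains to show the equivalence: $G$ has a cycle if and only if $G$ has a closed walk of positive length. The forward implication is immediate, since a cycle is itself such a closed walk (of length at least $2$, as $A$ has no self-loops), so it contributes at least $1$ to $\trace(A^k)$ for the appropriate $k$. For the reverse implication I would take a closed walk of positive length of minimal length and argue that minimality forces all of its interior vertices to be distinct: if some vertex were repeated in the interior, excising the portion between the two occurrences would yield a strictly shorter closed walk of positive length, contradicting minimality. Hence the minimal closed walk is a cycle. Read contrapositively, this says: if $G$ has no cycle, then $G$ has no closed walk of positive length, so every $\trace(A^k)$ vanishes and the full sum is $0$.

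I do not anticipate a serious obstacle. The only point requiring a moment of care is the passage from an arbitrary closed walk to an honest cycle, which the minimal-length argument resolves cleanly; and the use of nonnegativity, which is precisely what lets us move between ``the infinite sum is zero'' and ``every term is zero'' with no analytic convergence discussion, since the series is a sum of nonnegative integers and is thus either $0$ or at least $1$.
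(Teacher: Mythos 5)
Your proof is correct and follows essentially the same route the paper intends: the corollary is stated as an immediate consequence of the walk-counting lemma (trace of $A^k$ counts closed walks, nonnegativity makes the sum vanish iff every term does), and your minimal-length argument relating closed walks to cycles just fills in the standard detail the paper leaves implicit.
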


\begin{lemma} \label{lemma:upper-triangular}
Let $\sigma = \texttt{argsort}(\xi)$ be any permutation that sorts $\xi$ in ascending order, and let $Q$ be the corresponding permutation matrix. Then, $Q E Q^T$ is strictly upper triangular.
\end{lemma}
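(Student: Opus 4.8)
The plan is to reduce the statement to a sign analysis of the scalar map $x \mapsto \mathrm{ReLU}(\tanh(x/\beta))$ and then track how the sorting permutation reorders the rows and columns of $E$. First I would record the elementary fact that, since $\beta > 0$, one has $\mathrm{ReLU}(\tanh(x/\beta)) = 0$ whenever $x \le 0$ and $\mathrm{ReLU}(\tanh(x/\beta)) > 0$ whenever $x > 0$: indeed $\tanh$ is strictly increasing with $\tanh(t) \le 0 \iff t \le 0$, and $\mathrm{ReLU}$ sends exactly the nonpositive arguments to $0$ and fixes the positive ones. Applying this with $x = F_{ij} = B_{ij}(\xi_j - \xi_i)$ and using $B_{ij} \ge 0$, I conclude that $E_{ij} \ne 0$ forces both $B_{ij} > 0$ and $\xi_j - \xi_i > 0$; in particular $E_{ij} \ne 0 \implies \xi_i < \xi_j$, and $E_{ii} = 0$ for every $i$ because $\xi_i - \xi_i = 0$.

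Next I would push this implication through the permutation. Fix the convention $Q_{a,\sigma(a)} = 1$ for the permutation matrix associated to $\sigma = \texttt{argsort}(\xi)$, so that a direct computation gives $(QEQ^T)_{ab} = E_{\sigma(a)\sigma(b)}$ (the opposite convention merely replaces $\sigma$ by $\sigma^{-1}$, which still sorts $\xi$, so it changes nothing below). By definition of $\sigma$ we have $\xi_{\sigma(1)} \le \xi_{\sigma(2)} \le \dots \le \xi_{\sigma(L)}$. Now take any pair $a \ge b$. Monotonicity of the sorted sequence gives $\xi_{\sigma(a)} \ge \xi_{\sigma(b)}$, so the strict inequality $\xi_{\sigma(a)} < \xi_{\sigma(b)}$ fails, and hence by the implication of the previous paragraph $E_{\sigma(a)\sigma(b)} = 0$, i.e. $(QEQ^T)_{ab} = 0$. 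Since this covers all entries on and below the diagonal, $QEQ^T$ is strictly upper triangular.

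I do not expect a genuine obstacle: the argument is essentially a one-line monotonicity observation composed with the definition of $\texttt{argsort}$. The only points that need a little care are (i) including the diagonal in ``strictly upper triangular,'' which is handled by the trivial identity $\xi_i = \xi_i$, and (ii) ties in the node scores, which are harmless — if $\xi_{\sigma(a)} = \xi_{\sigma(b)}$ for some $a \ne b$ then $F_{\sigma(a)\sigma(b)} = 0$ and the entry is zero anyway, consistent with the claim. It is worth stating the chosen convention for $Q$ explicitly so that the bookkeeping in $(QEQ^T)_{ab} = E_{\sigma(a)\sigma(b)}$ is unambiguous.
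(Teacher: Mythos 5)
Your proof is correct, and it is leaner than the one in the paper. You argue directly from the sign of the scalar map $x \mapsto \mathrm{ReLU}(\tanh(x/\beta))$: since $B_{ij}\ge 0$, an entry $E_{ij}$ can be nonzero only when $\xi_i < \xi_j$, the diagonal vanishes because $\xi_i-\xi_i=0$, and conjugating by the sorting permutation (with the convention $(QEQ^T)_{ab}=E_{\sigma(a)\sigma(b)}$, which you rightly pin down) kills every entry with $a\ge b$. The paper instead introduces the auxiliary matrix $E^*_{ij}=\mathrm{ReLU}\bigl(\tanh\bigl(\tfrac{1}{\beta}(\mathcal{G}\xi)_{ij}\bigr)\bigr)$ (the $B$-free indicator), proves two monotonicity claims about how the entries of $QE^*Q^T$ vary along rows and columns, deduces that $QE^*Q^T$ is strictly upper triangular, and then passes to $E$ by noting its nonzero entries are a subset of those of $E^*$. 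Your route buys brevity and dispenses with the monotonicity bookkeeping, which is not needed for the zero pattern itself; it also treats ties in $\xi$ more cleanly, since you observe that a tie forces $F_{\sigma(a)\sigma(b)}=0$ and hence a zero entry, whereas the paper disposes of ties by a ``break ties arbitrarily, assume no repeated values'' reduction that your observation actually justifies. What the paper's longer detour buys is the intermediate object $E^*$, which encodes the full ancestor (complete-DAG) structure induced by the scores and feeds the surrounding discussion (e.g.\ the traversal-order corollary); for the lemma as stated, your direct sign argument suffices.
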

\begin{proof}
Let $\text{Anc}(v)$ denote the \textit{ancestors} of a node $v \in \mathcal{V}$, defined as
\begin{equation}
\text{Anc}(v) = \{ w \in \mathcal{V} \backslash \{v\} \,\,:\,\, \text{ there exists a path from } w \text{ to } v \text{ in } G\}.
\end{equation}
We introduce the matrix $E^*$, given by
\begin{equation}
E^*_{ij} = \text{ReLU}\left( \tanh \left( \frac{1}{\beta} (\mathcal{G}\xi)_{ij} \right) \right).
\end{equation}
By our rules for DAG assignment in Equation \eqref{eq:dag-assignment},
\begin{equation} \mathbf{N}_i \subseteq \text{Anc}(\mathbf{N}_j) \qquad \Longleftrightarrow \qquad  \lim_{\beta \rightarrow 0} E^*_{ij} = 1 \qquad \Longleftrightarrow \qquad \xi_i < \xi_j.
\end{equation}
Let $\sigma = \texttt{argsort}(\xi)$. In the case $\sigma$ is not unique i.e. $\xi$ has repeated values, we break ties arbitrarily but consistently; therefore, without loss of generality, assume $\xi$ has no repeated values. Define $Q$ as the permutation matrix corresponding to $\sigma$. By definition of $Q$,
\begin{equation}
(Q\xi)_i < (Q\xi)_j \text{ for any } i \le j.
\end{equation}
We will show the following:
\begingroup
\renewcommand\labelenumi{(\theenumi)}
\begin{enumerate}
\item Fix $i,j$; for any $a < i$, we have $(Q E^* Q^T)_{ij} \le (Q E^* Q^T)_{aj}$, and if also $a < j$, the inequality is strict.
\item Fix $i,j$; for any $b > j$, we have $(Q E^* Q^T)_{ij} \le (Q E^* Q^T)_{ib}$, and if also $b > i$, the inequality is strict.
\item If both (1) and (2) are true, then both $Q E^* Q^T$ and $Q E Q^T$ are strictly upper triangular.
\end{enumerate}
\endgroup
For (1): Fix $j$. Define $\widehat{\xi} = \xi_{\sigma(j)} \mathbf{1} - \xi$. Then, for any $a < i$, $$ (Q \widehat{\xi})_i < (Q \widehat{\xi})_a.$$ 
Since $\tanh$ and $\text{ReLU}$ are monotonic nondecreasing,
\begin{equation} \begin{split}
(Q E^* Q^T)_{ij} 
&= \text{ReLU}\left( \tanh \left( \frac{1}{\beta} \left( (Q\xi)_j - (Q \xi)_i \right) \right) \right) \\
&= \text{ReLU} \left( \tanh \left( \frac{1}{\beta} (Q\widehat{\xi})_i \right) \right) \\
&\le \text{ReLU} \left( \tanh \left( \frac{1}{\beta} (Q\widehat{\xi})_a \right) \right) \\
&= \text{ReLU} \left( \tanh \left( \frac{1}{\beta} \left( (Q\xi)_j - (Q \xi)_a \right) \right) \right) \\
&= (Q E^* Q^T)_{aj}.
\end{split} \end{equation}
If additionally $a < j$, then $(Q\widehat{\xi})_a > 0$, and since $\tanh$ and $\text{ReLU}$ are strictly monotonic increasing on $(0,\infty)$, the inequality becomes strict.

For (2), repeat the proof of (1), with $\widehat{\xi} = \xi - \xi_{\sigma(i)}\mathbf{1}$.

For (3), since the range of $\text{ReLU}$ is nonnegative, for all $i,j$, $(QE^*Q^T)_{ij} \ge 0$. However, the diagonal entries $E^*_{ii} = 0$ for all $i$, so $(Q E^* Q^T)_{ii} = 0$ for all $i$ as well. By (1) and (2), for any $i,j$ below the diagonal, $E^*_{ij} \le 0$. Therefore, $QE^*Q^T$ is zero on or below the diagonal i.e. strictly upper triangular. Since the nonzero entries of $E$ are a subset of the nonzero entries of $E^*$, $Q E Q^T$ must be strictly upper triangular as well.
\end{proof}

\begin{lemma}\label{lemma:isaDAG}
Let $A = \lim_{\beta \rightarrow 0} E$ be the adjacency matrix of a directed graph $G = (\mathcal{V}, \mathcal{E})$. Then, $G$ is a DAG.
\end{lemma}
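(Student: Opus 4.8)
The plan is to deduce Lemma~\ref{lemma:isaDAG} directly from the structural results already established, namely Corollary~\ref{cor:dag-trace-Ak} and Lemma~\ref{lemma:upper-triangular}. The strategy is: show that $A = \lim_{\beta\to 0} E$ is permutation-similar to a strictly upper triangular matrix, conclude that all powers $A^k$ have zero trace, and then invoke the trace characterization of acyclicity.

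First I would fix the node scores $\xi$ and let $\sigma = \texttt{argsort}(\xi)$ with corresponding permutation matrix $Q$ (breaking ties arbitrarily but consistently, as in Lemma~\ref{lemma:upper-triangular}). Since the entries of $E$ in Equation~\eqref{eq:e} depend continuously on $\beta > 0$, the limit $A = \lim_{\beta\to 0} E$ exists entrywise (each entry is $\text{ReLU}(\tanh(\cdot/\beta))$ of a fixed quantity, which tends to $0$ or $1$). By Lemma~\ref{lemma:upper-triangular}, $Q E Q^T$ is strictly upper triangular for every $\beta > 0$; since strict upper triangularity is preserved under entrywise limits, $Q A Q^T = \lim_{\beta\to 0} Q E Q^T$ is also strictly upper triangular.

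Next I would observe that a strictly upper triangular matrix is nilpotent, so $(QAQ^T)^k = 0$ for $k \ge |\mathcal{V}|$, and in particular $\text{trace}((QAQ^T)^k) = 0$ for all $k \ge 1$ (for small $k$ the diagonal of a power of a strictly upper triangular matrix is still zero). Since trace is invariant under conjugation, $\text{trace}(A^k) = \text{trace}(Q^T (QAQ^T)^k Q) = \text{trace}((QAQ^T)^k) = 0$ for all $k \ge 1$. Therefore $\sum_{k=1}^\infty \text{trace}(A^k) = 0$, and by Corollary~\ref{cor:dag-trace-Ak} the graph $G$ has no cycles. A directed graph with no cycles is by definition a DAG, completing the proof.

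The main obstacle — really the only subtlety — is making sure the limit $\beta \to 0$ interacts cleanly with the structural claims: one must check that the limit exists entrywise and that strict upper triangularity (equivalently, having zero entries on and below the diagonal after conjugating by $Q$) is a closed condition, hence preserved in the limit. This is routine since each entry is a bounded monotone function of $1/\beta$, but it is the one place where the argument uses more than a verbatim citation of the earlier lemmas. Everything else is a direct application of Lemma~\ref{lemma:upper-triangular} and Corollary~\ref{cor:dag-trace-Ak}.
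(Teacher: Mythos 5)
Your proof is correct and follows essentially the same route as the paper's: conjugate by the sorting permutation $Q$, invoke Lemma~\ref{lemma:upper-triangular} for each $\beta>0$, pass to the limit to get strict upper triangularity of $QAQ^T$, use trace invariance under conjugation and vanishing traces of powers, and conclude via Corollary~\ref{cor:dag-trace-Ak}. Your extra remarks on the entrywise existence of the limit and the closedness of strict upper triangularity are a welcome bit of added rigor but do not change the argument.
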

\begin{proof}
Let $\sigma$ be the permutation that sorts $\xi$ in ascending order and $Q$ be the corresponding permutation matrix.
Then, for any $\beta > 0$, by Lemma \ref{lemma:upper-triangular}, the matrix $Q E Q^T$ is strictly upper triangular, and therefore $QAQ^T = \lim_{\beta \rightarrow 0} Q E Q^T $ is strictly upper triangular. Additionally, since $Q$ is a permutation matrix, for any integer $k > 0$,
$$
\text{trace}((QAQ^T)^k) = \text{trace}(Q A^k Q^T) = \text{trace}(A^k),
$$
and since $QAQ^T$ is strictly upper triangular, the matrix $(QAQ^T)^k$ is strictly upper triangular as well.
Since the trace of any strictly upper triangular matrix is $0$, we conclude that $$\sum_{k=1}^\infty \text{trace}(A^k) = 0,$$ and therefore by Corollary \ref{cor:dag-trace-Ak},  $G$ is a DAG.
\end{proof}

\begin{corollary} \label{cor:order-traversal}
The permutation $\sigma$ provides the order to traverse the DAG in order.
\end{corollary}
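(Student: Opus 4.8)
The plan is to read the statement ``$\sigma$ provides the order to traverse the DAG in order'' as the assertion that $\sigma$ is a \emph{topological ordering} of $G$: after relabeling the nodes according to $\sigma$, every directed edge points from a smaller index to a larger index, and more generally every ancestor precedes its descendants. With this interpretation the corollary is essentially a restatement of Lemma~\ref{lemma:upper-triangular}, so the proof should be short.

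First I would recall that, with $Q$ the permutation matrix associated to $\sigma = \texttt{argsort}(\xi)$, Lemma~\ref{lemma:upper-triangular} gives that $QEQ^T$ is strictly upper triangular for every $\beta>0$, hence so is $QAQ^T = \lim_{\beta\to 0} QEQ^T$. Unwinding the permutation, strict upper-triangularity says exactly that $A_{ij}\neq 0$ (equivalently $\mathbf{N}_i \subseteq \Pa(\mathbf{N}_j)$) forces $\sigma^{-1}(i) < \sigma^{-1}(j)$, i.e.\ $i$ comes before $j$ in the order induced by $\sigma$. I would then promote this from single edges to ancestry: either by iterating along a directed path and using transitivity of $<$, or, more directly, by invoking the ancestor characterization $\mathbf{N}_i \subseteq \text{Anc}(\mathbf{N}_j) \Longleftrightarrow \xi_i < \xi_j$ established inside the proof of Lemma~\ref{lemma:upper-triangular} together with the fact that $\sigma$ sorts $\xi$ in ascending order. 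Either way, reindexing by $\sigma$ produces an ordering in which $k < \ell$ whenever $\mathbf{N}_k \in \text{Anc}(\mathbf{N}_\ell)$, which is precisely the hypothesis used in the inductive computation of $\mathbf{A}$ preceding Algorithm~\ref{alg:algorithm-A}, and is what it means to traverse the DAG in order.

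I do not expect a real obstacle here. The only subtlety is that $\xi$ may have repeated entries, so $\sigma$ need not be unique; but this is already handled in Lemma~\ref{lemma:upper-triangular} by breaking ties arbitrarily yet consistently, and any such choice yields a valid topological order because equal scores cannot support an edge in either direction (Equation~\ref{eq:dag-assignment} requires the strict inequality $\xi_i < \xi_j$). A clean write-up would therefore just state the topological-ordering interpretation, cite Lemma~\ref{lemma:upper-triangular} for the strictly-upper-triangular structure, and remark on the tie-breaking convention.
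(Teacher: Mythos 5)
Your proof is correct and matches the paper's intent: the paper gives no explicit proof of this corollary, treating it as an immediate consequence of Lemma~\ref{lemma:upper-triangular}, which is exactly the route you take (strict upper-triangularity of $QAQ^T$ means $\sigma$ is a topological order, with ancestry handled via $\xi_i<\xi_j$ and ties broken consistently). Your write-up in fact supplies the short argument the paper leaves implicit, including the tie-breaking caveat.
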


\begin{proposition} \label{prop:allDAGs}
The edge parametrization in Equation~\ref{eq:e} is sufficiently expressive to represent all possible DAGs.
\end{proposition}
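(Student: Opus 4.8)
The plan is to show that for any target DAG $G^\star = (\mathcal{V}, \mathcal{E}^\star)$ on $L$ nodes, we can choose node scores $\vec\xi$ and a nonnegative diagonal weighting $B$ so that the rule in Equation~\ref{eq:dag-assignment} recovers exactly $\mathcal{E}^\star$. The key structural fact I would invoke is that every DAG admits a topological ordering: there is a permutation $\sigma$ of $\{1,\dots,L\}$ such that every edge of $G^\star$ goes from a lower-ranked node to a higher-ranked node. This is exactly the converse direction to Lemma~\ref{lemma:isaDAG}, and it is the one genuinely nontrivial input — though it is a classical fact about DAGs, so I would cite it rather than reprove it.

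First I would fix such a topological order and define the node scores by $\xi_i = \sigma(i)$ (or any strictly increasing function of the topological rank), guaranteeing $\xi_i < \xi_j$ whenever $i$ precedes $j$ in the order, and in particular whenever $(i,j) \in \mathcal{E}^\star$. By Equation~\ref{eq:dag-assignment}, with these scores the ``complete DAG'' $E^\star$ (all edges consistent with the ordering) is realized in the $\beta \to 0$ limit. Next I would use the sparsity weighting $B$ to carve $\mathcal{E}^\star$ out of this complete DAG: set $B_{ij} > 0$ (say $B_{ij}=1$) exactly when $(i,j)\in\mathcal{E}^\star$, and $B_{ij}=0$ otherwise. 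Then $F_{ij} = B_{ij}(\xi_j-\xi_i)$ is positive precisely on the edges of $\mathcal{E}^\star$ and zero elsewhere among forward pairs; applying $\mathrm{ReLU}(\tanh(\cdot/\beta))$ and taking $\beta\to 0$ yields $\lim_{\beta\to 0} E_{ij} = 1 \iff (i,j)\in\mathcal{E}^\star$, which is the claim. I should also note that backward pairs ($\xi_i > \xi_j$) contribute $\tanh$ of a negative number, killed by the $\mathrm{ReLU}$, and the diagonal is zero since $\mathcal{G}\xi$ vanishes there, so no spurious edges appear.

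The main obstacle — really the only subtlety — is making sure the argument is airtight about the $\beta \to 0$ limit and about ties in $\vec\xi$: since I construct $\vec\xi$ with all distinct integer values, Lemma~\ref{lemma:upper-triangular} applies cleanly and there is no tie-breaking ambiguity, and the limit is taken entrywise exactly as in Lemma~\ref{lemma:isaDAG}. I would close by remarking that combining this with Lemma~\ref{lemma:isaDAG} gives the full ``if and only if'' of the Theorem stated in the main text: Lemma~\ref{lemma:isaDAG} shows every matrix of the form $\lim_{\beta\to 0}E$ is a DAG adjacency matrix, and Proposition~\ref{prop:allDAGs} shows conversely that every DAG adjacency matrix arises this way.
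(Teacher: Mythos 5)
Your argument is correct and is essentially the paper's own proof, spelled out in full: the paper's two-sentence justification (the graph gradient realizes any complete DAG, and the nonnegative weighting $B$ prunes it to any sub-DAG) is exactly your construction of node scores $\vec{\xi}$ from a topological order of the target DAG together with $B_{ij}=1$ on edges and $B_{ij}=0$ otherwise. Your version is more explicit about the topological-ordering input, the $\beta\to 0$ limit, and the absence of spurious edges, but it is the same route, not a different one.
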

\begin{proof}
    The graph gradient is sufficient to recover any complete DAG. We recover any sub-DAG of any complete DAG by eliminating edges through the multiplication by the metric $B$.
\end{proof}

\section{ELBO derivation} \label{app:elbo}
We consider the ELBO loss
\begin{equation}
    \mathcal{L} = \EE_{q(Z,\mathbf{N} \vert \mathbf{X})} \left[\log \frac{p(\mathbf{X},Z,\mathbf{N})}{q(Z,\mathbf{N}\vert\mathbf{X})} \right].
\end{equation}

For convenience, we denote $\EE_{q(Z,\mathbf{N}\vert\mathbf{X})}$ as $\EE_q$. With our assumptions (Equation~\ref{eq:assumptions}), this ELBO expression becomes
\begin{equation} \begin{split}
\mathcal{L} &= \EE_{q} \left[\log \frac{p(\mathbf{X},Z,\mathbf{N})}{q(Z,\mathbf{N}\vert\mathbf{X})} \right] \\
&= \EE_q \log p(\mathbf{X},Z,\mathbf{N}) - \EE_q \log q(Z,\mathbf{N} \vert \mathbf{X}) \\
&= \EE_q \log \left( \left( \prod_{m=1}^M p(X_m \vert Z,\mathbf{N}) \right) p(Z\vert \mathbf{N}) p(\mathbf{N}) \right) - \EE_q \log \left( q(Z \vert \mathbf{X}) q(\mathbf{N} \vert \mathbf{X}) \right) \\
&= \sum_{m=1}^M \EE_q \log p(X_m \vert Z,\mathbf{N}) + \EE_q \log p(Z \vert \mathbf{N}) + \EE_q \log p(\mathbf{N}) - \EE_q \log q(Z\vert \mathbf{X}) - \EE_q \log q(\mathbf{N} \vert \mathbf{X}). \\
\end{split}  \label{eq:pima_elbo} \end{equation}

We estimate the distribution $q(\mathbf{N} \vert \mathbf{X})$ following \cite{jiang2016variational} by 
\begin{align} \label{eq:gamma-expr-derivation} \begin{split}
\gamma :&= q(\mathbf{N} \vert \mathbf{X}) = p(\mathbf{N} \vert Z) = \frac{ p(\mathbf{N}) p(Z \vert \mathbf{N}) }{p(Z)} \\
\gamma_{c_1,\dots,c_L} &= \frac{p(\mathbf{N}_{c_1,\dots,c_L}) p(Z \vert \mathbf{N}_{c_1,\dots,c_L})}{ \sum_{c'_1=1}^{C_1}\cdots \sum_{c'_L=1}^{C_L} p(\mathbf{N}_{c'_1,\dots,c'_L})  p(Z \vert \mathbf{N}_{c'_1,\dots,c'_L}) } \\
&= \frac{ \mathbf{A}_{c_1,\dots,c_L} p(Z \vert \mathbf{N}_{c_1,\dots,c_L}) }{ \sum_{c'_1=1}^{C_1} \cdots \sum_{c'_L=1}^{C_L} \mathbf{A}_{c'_1,\dots,c'_L} p(Z \vert \mathbf{N}_{c'_1\dots c'_L})} \\
\end{split}\end{align}
where we denote $\mathbf{A} := p(\mathbf{N})$ for convenience. Note that $\mathbf{A}$ and $\gamma$ are both tensors with $L$ modes, of size $C_1 \times \dots \times C_L$. The tensor $\mathbf{A}$ can be calculated via Algorithm \ref{alg:algorithm-A} in Section \ref{sec:joint-def-A}. The values of $p(Z \vert \mathbf{N})$ can be computed by sampling from each Gaussian in the Gaussian mixture model.

We now compute each expectation in Equation \eqref{eq:pima_elbo} using Corollary~\ref{lemma:int-gaussian-log-gaussian} to evaluate integrals.
\begin{enumerate}
    \item We compute $\EE_{q(Z,\mathbf{N} \vert \mathbf{X})}\log p(X_m \vert Z, \mathbf{N}) $ via the following:
    \begin{align} \begin{split}
        \EE_{q(Z,\mathbf{N}\vert\mathbf{X})} \log p(X_m \vert Z,\mathbf{N}) &= \log p(X_m \vert Z,\mathbf{N}) \\
        &= \log \left(\frac{1}{\sqrt{2\pi}\widehat{\sigma}_m}\right) - \frac{1}{2} \left \lVert \frac{X_m - \widehat{\mu}_m }{\widehat{\sigma}_m} \right \rVert^2\\
        &= -\frac{1}{2} \log (2 \pi \widehat{\sigma}_m^2) -  \frac{1}{2} \left \lVert \frac{X_m - \widehat{\mu}_m }{\widehat{\sigma}_m} \right \rVert^2.
    \end{split} \end{align}
    \item We compute $\EE_{q(Z, \mathbf{N} \vert \mathbf{X})} \log p(Z \vert \mathbf{N}) $ via the following:
    \begin{align}\begin{split}
        \EE_{q(Z, \mathbf{N} \vert \mathbf{X})} \log p(Z \vert \mathbf{N}) 
        &=  \sum_{\mathbf{N}} q(\mathbf{N} \vert \mathbf{X}) \int_Z q(Z \vert \mathbf{X}) \log p(Z \vert \mathbf{N}) dZ \\
        &= \sum_{c_1=1}^{C_1}\cdots \sum_{c_L=1}^{C_L}  q(\mathbf{N}_{c_1,\dots,c_L} \vert \mathbf{X}) \cdot \int_{Z} q(Z \vert \mathbf{X}) \log p(Z \vert \mathbf{N}_{c_1,\dots,c_L}) d Z \\
        &= \sum_{c_1=1}^{C_1}\cdots \sum_{c_L=1}^{C_L} \gamma_{c_1,\dots,c_L} \cdot \left[-\frac{1}{2} \sum_{j=1}^{J} \log 2\pi \widetilde{\sigma}_{c_1,\dots,c_L;j}^2 + \frac{\sigma_{j}^2}{\widetilde{\sigma}_{c_1,\dots,c_L;j}^2} + \frac{(\mu_{j} - \widetilde{\mu}_{c_1,\dots,c_L;j})^2}{\widetilde{\sigma}_{c_1,\dots,c_L;j}^2} \right],
    \end{split}
    \end{align}
    where $J = \dim Z$.   
    \item We compute $\EE_{q(Z, \mathbf{N} \vert \mathbf{X})} \log p{(\mathbf{N})}$ via the following:
    \begin{align}\begin{split}
        \EE_{q(Z, \mathbf{N} \vert \mathbf{X})} \log p{(\mathbf{N})} &= \sum_{\mathbf{N}} q(\mathbf{N} \vert \mathbf{X}) \int_{Z} q (Z \vert \mathbf{X}) \log p(\mathbf{N}) dZ \\
        &= \sum_{\mathbf{N}} q (\mathbf{N} \vert \mathbf{X}) \log p(\mathbf{N})\\
        &= \sum_{c_1=1}^{C_1}\cdots \sum_{c_L=1}^{C_L} \gamma_{c_1,\dots,c_L} \log \mathbf{A}_{c_1,\dots,c_L}.
    \end{split}
    \end{align}
    \item We compute $\EE_{q(Z, \mathbf{N} \vert \mathbf{X})} \log q{(Z \vert \mathbf{X})} $ via the following:
    \begin{align}\begin{split}
        \EE_{q(Z, \mathbf{N} \vert \mathbf{X})} \log q{(Z \vert \mathbf{X})} 
        &=  \int_{Z} q(Z \vert \mathbf{X}) \log q(Z \vert \mathbf{X}) dZ =   -\frac{1}{2} \sum_{j=1}^{J} \left( \log (2\pi \sigma_{j}^2) + 1 \right)  .
    \end{split}
    \end{align}
    \item We compute $\EE_{q(Z, \mathbf{N} \vert \mathbf{X})} \log q{( \mathbf{N} \vert \mathbf{X})}$ via the following:
    \begin{align}\begin{split}
        \EE_{q(Z, \mathbf{N} \vert \mathbf{X})} \log q{( \mathbf{N} \vert \mathbf{X})} &= \sum_\mathbf{N} q(\mathbf{N} \vert \mathbf{X}) \int_{Z} q(Z \vert \mathbf{X}) \log q(\mathbf{N}\vert \mathbf{X}) dZ \\ 
        &= \sum_{\mathbf{N}} q(\mathbf{N} \vert \mathbf{X}) \log q(\mathbf{N} \vert \mathbf{X}) \\
        &= \sum_{c_1=1}^{C_1} \cdots \sum_{c_L=1}^{C_L} \gamma_{c_1,\dots,c_L} \log \gamma_{c_1,\dots,c_L}.
    \end{split}
    \end{align}
\end{enumerate}

Returning to the ELBO expression and combining all terms together, we have
\begin{equation} \begin{split}
\mathcal{L} &= \sum_{m=1}^M \EE_q \log p(X_m \vert Z,\mathbf{N}) +  \EE_q \log p(Z \vert \mathbf{N}) +  \EE_q \log p(\mathbf{N}) - \EE_q \log q(Z \vert \mathbf{X}) -  \EE_q \log q(\mathbf{N} \vert \mathbf{X}) \\
&= -\frac{1}{2}\sum_{m=1}^M \log (2 \pi \widehat{\sigma}_m^2) + \left \lVert \frac{X_m - \widehat{\mu}_m }{\widehat{\sigma}_m} \right \rVert^2 \\
&\qquad  - \frac{1}{2} \sum_{c_1=1}^{C_1}\cdots \sum_{c_L=1}^{C_L} \gamma_{c_1,\dots,c_L} \cdot \left[ \sum_{j=1}^{J} \log 2\pi \widetilde{\sigma}_{c_1,\dots,c_L;j}^2 + \frac{\sigma_{j}^2}{\widetilde{\sigma}_{c_1,\dots,c_L;j}^2} + \frac{(\mu_{j} - \widetilde{\mu}_{c_1,\dots,c_L;j})^2}{\widetilde{\sigma}_{c_1,\dots,c_L;j}^2} \right] \\
&\qquad + \sum_{c_1=1}^{C_1} \cdots \sum_{c_L=1}^{C_L} \gamma_{c_1,\dots,c_L} \log \mathbf{A}_{c_1,\dots,c_L} \\
&\qquad + \frac{1}{2} \sum_{j=1}^{J} \left( \log (2\pi \sigma_{j}^2) + 1 \right) \\
&\qquad - \sum_{c_1=1}^{C_1} \cdots \sum_{c_L=1}^{C_L} \gamma_{c_1,\dots,c_L} \log \gamma_{c_1,\dots,c_L} \\
\end{split} \end{equation}
Since any constant terms in $\mathcal{L}$ do not have bearing on the solution to the maximization problem, we can remove them; after rescaling, we have
\begin{equation} \begin{split}
\mathcal{L} &= - \sum_{m=1}^M \log (\widehat{\sigma}_m^2) + \left \lVert \frac{X_m - \widehat{\mu}_m}{\widehat{\sigma}_m}\right \rVert^2 \\
&\qquad + \sum_{j=1}^{J} \log(\sigma_{j}^2) \\
&\qquad + \sum_{c_1=1}^{C_1}\cdots \sum_{c_L=1}^{C_L} \gamma_{c_1,\dots,c_L} \cdot \left[ 2 \log\left(\frac{\mathbf{A}_{c_1,\dots,c_L}}{\gamma_{c_1,\dots,c_L}}\right) -  \sum_{j=1}^{J} \log (\widetilde{\sigma}_{c_1,\dots,c_L;j}^2) + \frac{\sigma_{j}^2}{\widetilde{\sigma}_{c_1,\dots,c_L;j}^2} + \frac{(\mu_{j} - \widetilde{\mu}_{c_1,\dots,c_L;j})^2}{\widetilde{\sigma}_{c_1,\dots,c_L;j}^2}  \right] \\
\end{split} \end{equation}
We describe how to compute $\mathbf{A}$ and $\gamma$ in Section \ref{sec:joint-def-A}.

In terms of architecture, the distributions are learned or computed in the following manner, where the $d$ subscripts index batching over several data points:
\begin{align} \begin{split}
    [\widehat{\mu}_m, \widehat{\sigma}_m] &= D_m(Z;\hat{\theta}_m), \text{ where } D_m \text{ is a neural network or expert model} \\
    [\mu_{m}, \sigma_{m}] &= F_{m}(X_m; \theta_m), \text{ where } F_{m} \text{ is a neural network} \\
\widetilde{\mu}_{c_1,\dots,c_L}  &= \frac{\sum_d  \mu^{(d)}  \gamma_{c_1, \dots, c_L}^{(d)}}{\sum_d  \gamma_{c_1, \dots, c_L}^{(d)}}, \text{ where } d \text{ indexes the } d^{th} \text{ data point and } \mu^{(d)} \text{ is the encoded mean of the } d^{th} \text{ data point}  \\
\widetilde{\sigma}_{c_1,\dots,c_L}^2 &=  \frac{\sum_d  (( \mu^{(d)} - \widetilde{\mu}_{c_1,\dots,c_L} )^2 + \sigma^{2(d)})\gamma_{c_1, \dots, c_L}^{(d)}}{\sum_d  \gamma_{c_1, \dots, c_L}^{(d)}}, \text{ where } d \text{ indexes the } d^{th} \text{data point } 
\end{split}
\end{align}

\section{Extension for general multivariate Gaussians}
\label{app:multivariate-gaussian-derivation}

Our ELBO is computationally tractable because our model uses Gaussians extensively. While we do restrict to Gaussians with diagonal covariances matrices, we show that computational tractability remains when using a generalized covariance matrix. We provide the lemma of \cite{jiang2016variational}, and then state and prove the generalized version.

\begin{corollary}  \cite{jiang2016variational}
Given Gaussian distributions $\mathbf{Y}_1 \sim \mathcal{N}(\mu_1, \sigma_1^2 \mathbf{I})$ and $\mathbf{Y}_2 \sim \mathcal{N}(\mu_2, \sigma_2^2 \mathbf{I})$ defined over the same probability space, where $\mu_1,\mu_2,\sigma_1^2,\sigma_2^2 \in \RR^J$, we have
\begin{equation}
\int_\Omega \Prob{\mathbf{Y}_1} \log \Prob{\mathbf{Y}_2}\, \text{d}\mu = - \frac{1}{2} \left( \sum_{j} \log(2\pi \sigma_{2,j}^2) + \frac{\sigma_{1,j}^2}{\sigma_{2,j}^2} + \frac{\left( \mu_{1,j} - \mu_{2,j} \right)^2}{\sigma_{2,j}^2} \right).
\label{eq:int-gaussian-log-gaussian-result}
\end{equation}
\label{lemma:int-gaussian-log-gaussian}
\end{corollary}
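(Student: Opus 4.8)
The plan is to reduce the $J$-dimensional integral to a sum of one-dimensional Gaussian integrals, exploiting that both densities have diagonal covariance and therefore factor coordinatewise. Writing $\Prob{\mathbf{Y}_1} = \prod_{j=1}^J p_{1,j}(y_j)$ and $\Prob{\mathbf{Y}_2} = \prod_{j=1}^J p_{2,j}(y_j)$, where $p_{i,j}$ denotes the univariate $\calN(\mu_{i,j},\sigma_{i,j}^2)$ density, the logarithm turns $\log\Prob{\mathbf{Y}_2}$ into $\sum_{j} \log p_{2,j}(y_j)$. Since each $p_{1,k}$ integrates to $1$ in $y_k$, all cross terms drop out by Fubini and normalization, leaving
\[
\int_{\RR^J} \Prob{\mathbf{Y}_1}\log\Prob{\mathbf{Y}_2}\,\mathrm{d}y \;=\; \sum_{j=1}^J \int_{\RR} p_{1,j}(y)\,\log p_{2,j}(y)\,\mathrm{d}y ,
\]
so it suffices to evaluate a single scalar integral.

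For the scalar term I would substitute $\log p_{2,j}(y) = -\tfrac12\log(2\pi\sigma_{2,j}^2) - \tfrac{(y-\mu_{2,j})^2}{2\sigma_{2,j}^2}$, which makes the integral equal to $-\tfrac12\log(2\pi\sigma_{2,j}^2) - \tfrac{1}{2\sigma_{2,j}^2}\,\EE_{Y\sim p_{1,j}}\!\big[(Y-\mu_{2,j})^2\big]$. The remaining second moment is handled by the bias--variance identity $\EE[(Y-\mu_{2,j})^2] = \mathrm{Var}(Y) + (\EE[Y]-\mu_{2,j})^2 = \sigma_{1,j}^2 + (\mu_{1,j}-\mu_{2,j})^2$, using $Y\sim\calN(\mu_{1,j},\sigma_{1,j}^2)$. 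Substituting back gives $-\tfrac12\big(\log(2\pi\sigma_{2,j}^2) + \tfrac{\sigma_{1,j}^2}{\sigma_{2,j}^2} + \tfrac{(\mu_{1,j}-\mu_{2,j})^2}{\sigma_{2,j}^2}\big)$, and summing over $j$ yields the stated formula.

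There is no genuine obstacle: this is the routine cross-entropy computation for diagonal Gaussians, and every step is elementary. The only points that merit a sentence of care are the cross-term cancellation in the factorization (which needs only Fubini plus normalization of Gaussian densities) and the second-moment expansion (one may instead complete the square in the exponent). Alternatively, if the general-covariance result Lemma~\ref{lemma:int-multivariate-gaussian-log-gaussian} is taken as given, the Corollary follows immediately by specializing $\Sigma_i = \sigma_i^2\mathbf{I}$ and using $\log\det(\sigma_2^2\mathbf{I}) = \sum_j \log\sigma_{2,j}^2$ together with $\trace(\Sigma_2^{-1}\Sigma_1) = \sum_j \sigma_{1,j}^2/\sigma_{2,j}^2$.
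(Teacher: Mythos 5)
Your argument is correct, and it takes a different (more elementary) route than the paper. The paper does not prove this statement directly: it is imported as a cited result from \cite{jiang2016variational}, and within the appendix it is recovered only as the special case $\Sigma_1=\sigma_1^2\mathbf{I}$, $\Sigma_2=\sigma_2^2\mathbf{I}$ of the general-covariance Lemma~\ref{lemma:int-multivariate-gaussian-log-gaussian}, whose proof goes through Cholesky factors $\Sigma_i^{-1}=L_i^TL_i$, a change of variables, cancellation of the cross term by odd symmetry, and the identity $\trace(L_1^{-T}L_2^TL_2L_1^{-1})=\trace(\Sigma_1\Sigma_2^{-1})$. You instead exploit the diagonal structure from the start: factor both densities coordinatewise, use Fubini plus normalization of the marginals to reduce to $J$ scalar integrals, and finish each with the bias--variance identity $\EE[(Y-\mu_{2,j})^2]=\sigma_{1,j}^2+(\mu_{1,j}-\mu_{2,j})^2$; combining $-\tfrac12\log(2\pi\sigma_{2,j}^2)$ with the quadratic term reproduces the stated sum exactly. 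Your direct computation is self-contained and avoids all linear algebra, which is a genuine advantage if one only ever needs diagonal covariances (as the paper assumes in practice); the paper's route costs more machinery but delivers the strictly stronger general-covariance formula, from which the diagonal case follows by $\log\det(\Sigma_2)=\sum_j\log\sigma_{2,j}^2$ and $\trace(\Sigma_1\Sigma_2^{-1})=\sum_j\sigma_{1,j}^2/\sigma_{2,j}^2$ --- which is precisely the alternative you note in your last sentence and is how the paper's final corollary in Appendix~\ref{app:multivariate-gaussian-derivation} frames it. The only point worth a word of care in your version is justifying the interchange of sum and integral: $\log p_{2,j}$ is not sign-definite, but it is dominated by a quadratic, so integrability against the Gaussian $p_1$ holds and Fubini applies.
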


\begin{lemma}
Given Gaussian distributions $\mathbf{Y}_1 \sim \mathcal{N}(\vec{\mu}_1, \Sigma_1 )$ and $\mathbf{Y}_2 \sim \mathcal{N}(\vec{\mu}_2, \Sigma_2 )$ defined over the same probability space, where $\Sigma_1$ and $\Sigma_2$ are symmetric positive definite covariance matrices, and $\vec{\mu}_1,\vec{\mu}_2 \in \RR^J$ and $\Sigma_1,\,\Sigma_2\in\RR^{J\times J}$, we have
\begin{equation}
\int_{\RR^J} \Prob{\mathbf{Y}_1} \log \Prob{\mathbf{Y}_2} \text{d}\vec{y} = 
-\frac{J}{2}\log(2\pi) 
- \frac{1}{2}\log(\det(\Sigma_2)) 
- \frac{1}{2} (\vec{\mu}_1 - \vec{\mu_2})^T \Sigma_2^{-1} (\vec{\mu}_1 - \vec{\mu_2}) 
- \frac{1}{2}\trace(\Sigma_1\Sigma_2^{-1}).
\label{eq:int-multivariate-gaussian-log-gaussian-result}
\end{equation}
 \label{lemma:int-multivariate-gaussian-log-gaussian}
\end{lemma}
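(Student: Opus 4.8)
The plan is to compute the integral directly by expanding $\log \Prob{\mathbf{Y}_2}$ and taking the expectation term-by-term under $\mathbf{Y}_1$. Writing out the multivariate Gaussian density, we have
\begin{equation}
\log \Prob{\mathbf{Y}_2} = -\frac{J}{2}\log(2\pi) - \frac{1}{2}\log(\det(\Sigma_2)) - \frac{1}{2}(\vec{y} - \vec{\mu}_2)^T \Sigma_2^{-1} (\vec{y} - \vec{\mu}_2).
\end{equation}
Integrating against $\Prob{\mathbf{Y}_1}$, the first two terms are constants and pass through untouched (since the density integrates to $1$), contributing $-\frac{J}{2}\log(2\pi) - \frac{1}{2}\log(\det(\Sigma_2))$. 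So the whole problem reduces to evaluating $\EE_{\mathbf{Y}_1}\!\left[(\mathbf{Y}_1 - \vec{\mu}_2)^T \Sigma_2^{-1} (\mathbf{Y}_1 - \vec{\mu}_2)\right]$.

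For that expectation, I would use the standard quadratic-form identity: for a random vector $\mathbf{Y}_1$ with mean $\vec{\mu}_1$ and covariance $\Sigma_1$, and any fixed symmetric matrix $M$, one has $\EE\!\left[(\mathbf{Y}_1 - \vec{c})^T M (\mathbf{Y}_1 - \vec{c})\right] = (\vec{\mu}_1 - \vec{c})^T M (\vec{\mu}_1 - \vec{c}) + \trace(M\Sigma_1)$. The quick derivation is to write $\mathbf{Y}_1 - \vec{c} = (\mathbf{Y}_1 - \vec{\mu}_1) + (\vec{\mu}_1 - \vec{c})$, expand the quadratic form into three pieces, observe the cross terms vanish in expectation because $\EE[\mathbf{Y}_1 - \vec{\mu}_1] = 0$, and use the cyclic property of the trace together with $\EE[(\mathbf{Y}_1-\vec{\mu}_1)(\mathbf{Y}_1-\vec{\mu}_1)^T] = \Sigma_1$ to identify $\EE[(\mathbf{Y}_1-\vec{\mu}_1)^T M (\mathbf{Y}_1-\vec{\mu}_1)] = \trace(M\Sigma_1)$. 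Applying this with $\vec{c} = \vec{\mu}_2$ and $M = \Sigma_2^{-1}$ gives $(\vec{\mu}_1 - \vec{\mu}_2)^T \Sigma_2^{-1} (\vec{\mu}_1 - \vec{\mu}_2) + \trace(\Sigma_2^{-1}\Sigma_1)$. Multiplying by $-\tfrac12$ and combining with the constant terms yields exactly Equation~\eqref{eq:int-multivariate-gaussian-log-gaussian-result}, where I use $\trace(\Sigma_2^{-1}\Sigma_1) = \trace(\Sigma_1 \Sigma_2^{-1})$ by cyclicity to match the stated form.

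There is no real obstacle here — this is a routine second-moment computation — but the one step that deserves care is the vanishing of the cross term $\EE\!\left[(\mathbf{Y}_1 - \vec{\mu}_1)^T \Sigma_2^{-1} (\vec{\mu}_1 - \vec{\mu}_2)\right]$ and its transpose; this needs $\Sigma_2^{-1}$ symmetric (which holds since $\Sigma_2$ is SPD) so that the two cross terms are genuinely equal and each is a linear functional of a mean-zero vector. I would also remark that the diagonal-covariance Corollary~\ref{lemma:int-gaussian-log-gaussian} is recovered by setting $\Sigma_i = \sigma_i^2 \mathbf{I}$, under which $\log\det(\Sigma_2) = \sum_j \log \sigma_{2,j}^2$ and $\trace(\Sigma_1\Sigma_2^{-1}) = \sum_j \sigma_{1,j}^2/\sigma_{2,j}^2$, so the general lemma subsumes the one actually used in the ELBO derivation.
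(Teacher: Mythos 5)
Your proposal is correct, and it reaches the identity by a somewhat different route than the paper. The paper works directly with the integral: it takes Cholesky factors $\Sigma_1^{-1}=L_1^TL_1$, $\Sigma_2^{-1}=L_2^TL_2$, splits $L_2(\vec{y}-\vec{\mu}_2)=L_2L_1^{-1}L_1(\vec{y}-\vec{\mu}_1)+L_2(\vec{\mu}_1-\vec{\mu}_2)$, kills the cross term by odd symmetry about $\vec{\mu}_1$, and finishes with the change of variables $\vec{x}=L_1(\vec{y}-\vec{\mu}_1)$, identifying the remaining Gaussian integral as $\trace(L_1^{-T}L_2^TL_2L_1^{-1})=\trace(\Sigma_1\Sigma_2^{-1})$. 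You instead invoke the standard quadratic-form identity $\EE\left[(\mathbf{Y}_1-\vec{c})^TM(\mathbf{Y}_1-\vec{c})\right]=(\vec{\mu}_1-\vec{c})^TM(\vec{\mu}_1-\vec{c})+\trace(M\Sigma_1)$, which packages the same decomposition (center at $\vec{\mu}_1$, cross terms vanish in expectation, second moment gives the trace) without any Cholesky factorization or explicit change of variables. Your version is shorter and slightly more general: the expectation step uses only that $\mathbf{Y}_1$ has mean $\vec{\mu}_1$ and covariance $\Sigma_1$, so Gaussianity of $\mathbf{Y}_1$ is never needed, only that of $\mathbf{Y}_2$ entering through $\log\Prob{\mathbf{Y}_2}$; the paper's computation is more self-contained in that it evaluates the Gaussian integral from scratch rather than citing the moment identity. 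One small remark: symmetry of $\Sigma_2^{-1}$ is not actually needed for the cross terms to vanish, since each cross term is a scalar linear functional of $\mathbf{Y}_1-\vec{\mu}_1$ and so has zero expectation on its own; your caution there is harmless but not essential.
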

\begin{proof}
Recall the density function of a multivariate Gaussian random variable
\begin{equation}
\Prob{\mathbf{Y} = \vec{y}} = (2 \pi)^{-\frac{J}{2}} \det(\Sigma)^{-\frac{1}{2}} \exp\left(-\frac{1}{2}(\vec{y}-\vec{\mu})^T\Sigma^{-1}(\vec{y}-\vec{\mu})\right).
\end{equation}
By definition,
\begin{equation} \begin{split}
\int_{\RR^J} \Prob{\mathbf{Y}_1} \log \Prob{\mathbf{Y}_2} \text{d}\vec{y} &= \int_{\RR^J} (2 \pi)^{-\frac{J}{2}} \det(\Sigma_1)^{-\frac{1}{2}} \exp\left(-\frac{1}{2}(\vec{y}-\vec{\mu}_1)^T\Sigma_1^{-1}(\vec{y}-\vec{\mu}_1)\right) \\ 
&\qquad\qquad \cdot \log \left[ (2 \pi)^{-\frac{J}{2}} \det(\Sigma_2)^{-\frac{1}{2}} \exp\left(-\frac{1}{2}(\vec{y}-\vec{\mu}_2)^T\Sigma_2^{-1}(\vec{y}-\vec{\mu}_2)\right) \right] \text{d}\vec{y}\\
&= \int_{\RR^J} (2 \pi)^{-\frac{J}{2}} \det(\Sigma_1)^{-\frac{1}{2}} \exp\left(-\frac{1}{2}(\vec{y}-\vec{\mu_1})^T\Sigma_1^{-1}(\vec{y}-\vec{\mu_1})\right) \\ 
&\qquad\qquad \cdot \left( \frac{-J}{2} \log(2\pi) - \frac{1}{2}\log(\det(\Sigma_2)) -\frac{1}{2}(\vec{y}-\vec{\mu_2})^T\Sigma_2^{-1}(\vec{y}-\vec{\mu_2})\right)  \text{d}\vec{y}.\\
\end{split} \label{eq:int-multivariate-gaussian-1}\end{equation}
Since $\Sigma_1$ and $\Sigma_2$ are symmetric positive definite, their inverses are also symmetric positive definite and therefore have well-defined Cholesky factors, denoted by
\begin{equation}\begin{split}
\Sigma_1^{-1} &= L_1^T L_1 \\
\Sigma_2^{-1} &= L_2^T L_2.
\end{split}\end{equation}
Therefore, Equation \eqref{eq:int-multivariate-gaussian-1} becomes
\begin{equation} \begin{split}
\int_{\RR^J} \Prob{\mathbf{Y}_1} \log \Prob{\mathbf{Y}_2} \text{d}\vec{y} &=  -\frac{J}{2} \log(2\pi) - \frac{1}{2}\log(\det(\Sigma_2)) \\
&\qquad\qquad - \frac{1}{2} \int_{\RR^J} (2 \pi)^{-\frac{J}{2}} \det(\Sigma_1)^{-\frac{1}{2}} \exp\left(-\frac{1}{2}\lVert L_1(\vec{y}-\vec{\mu}_1)\rVert_2^2 \right) \lVert L_2(\vec{y}-\vec{\mu}_2) \rVert_2^2  \,  \text{d}\vec{y}.\\
\end{split} \label{eq:int-multivariate-gaussian-2} \end{equation}
We split
\begin{equation} \begin{split}
L_2(\vec{y}-\vec{\mu}_2) &= L_2 L_1^{-1} L_1 (\vec{y}-\vec{\mu}_1) + L_2(\vec{\mu}_1 - \vec{\mu}_2) \\
&:= \vec{x}_1 + \vec{x}_2
\end{split} \end{equation}
so that the remaining integral on the right-hand side of Equation \eqref{eq:int-multivariate-gaussian-2} becomes
\begin{equation} \begin{split}
\int_{\RR^J} (2 \pi)^{-\frac{J}{2}} & \det(\Sigma_1)^{-\frac{1}{2}} \exp\left(-\frac{1}{2}\lVert L_1(\vec{y}-\vec{\mu}_1)\rVert_2^2 \right) \lVert L_2(\vec{y}-\vec{\mu}_2) \rVert_2^2  \,  \text{d}\vec{y} \\
&=  \int_{\RR^J} (2 \pi)^{-\frac{J}{2}} \det(\Sigma_1)^{-\frac{1}{2}} \exp\left(-\frac{1}{2}\lVert L_1(\vec{y}-\vec{\mu}_1)\rVert_2^2 \right) \lVert \vec{x}_1 + \vec{x}_2 \rVert_2^2  \,  \text{d}\vec{y}\\
&= \int_{\RR^J} (2 \pi)^{-\frac{J}{2}} \det(\Sigma_1)^{-\frac{1}{2}} \exp\left(-\frac{1}{2}\lVert L_1(\vec{y}-\vec{\mu}_1)\rVert_2^2 \right) \left( \lVert \vec{x}_1 \rVert_2^2 + \lVert \vec{x}_2 \rVert_2^2 + 2 \langle \vec{x}_1, \vec{x}_2 \rangle \right) \, \text{d}\vec{y} \\
\end{split} \end{equation}
Since $\vec{x}_2$ is independent of $\vec{y}$, we can pull this term out of the integral. As the term involving $\langle \vec{x}_1, \vec{x}_2 \rangle$ is an odd function around $\vec{\mu_1}$, its integral equals 0. Therefore,
\begin{equation} \begin{split}
\int_{\RR^J} (2 \pi)^{-\frac{J}{2}} & \det(\Sigma_1)^{-\frac{1}{2}} \exp\left(-\frac{1}{2}\lVert L_1(\vec{y}-\vec{\mu}_1)\rVert_2^2 \right) \lVert L_2(\vec{y}-\vec{\mu}_2) \rVert_2^2  \,  \text{d}\vec{y} \\
&= \lVert L_2(\vec{\mu}_1 - \vec{\mu}_) \rVert_2^2 + \int_{\RR^J} (2 \pi)^{-\frac{J}{2}}  \det(\Sigma_1)^{-\frac{1}{2}} \exp\left(-\frac{1}{2}\lVert L_1(\vec{y}-\vec{\mu}_1)\rVert_2^2 \right) \lVert L_2 L_1^{-1} L_1 (\vec{y} - \vec{\mu}_1) \rVert_2^2 \, \text{d}\vec{y}.\\
\end{split} \end{equation}
With the change of variables $\vec{x} = L_1(\vec{y}-\vec{\mu}_1)$,
\begin{equation} \begin{split}
\int_{\RR^J} (2 \pi)^{-\frac{J}{2}} & \det(\Sigma_1)^{-\frac{1}{2}} \exp\left(-\frac{1}{2}\lVert L_1(\vec{y}-\vec{\mu}_1)\rVert_2^2 \right) \lVert L_2(\vec{y}-\vec{\mu}_2) \rVert_2^2  \,  \text{d}\vec{y} \\
&= \lVert L_2(\vec{\mu}_1 - \vec{\mu}_2) \rVert_2^2 + \int_{\RR^J} (2 \pi)^{-\frac{J}{2}}  \exp\left(-\frac{1}{2}\lVert \vec{x} \rVert_2^2 \right) \lVert L_2 L_1^{-1} \vec{x} \rVert_2^2 \, \text{d}\vec{x}\\
&= \lVert L_2(\vec{\mu}_1-\vec{\mu}_2)\rVert_2^2 + \trace(L_1^{-T} L_2^T L_2 L_1^{-1}) \\
&= \lVert L_2(\vec{\mu}_1-\vec{\mu}_2)\rVert_2^2 + \trace( \Sigma_1 \Sigma_2^{-1} ). \\
\end{split}\end{equation}
Returning to Equation \eqref{eq:int-multivariate-gaussian-2}, we finally have
\begin{equation} \begin{split}
\int_{\RR^J} \Prob{\mathbf{Y}_1} \log \Prob{\mathbf{Y}_2} \text{d}\vec{y} &=  \frac{-J}{2} \log(2\pi) - \frac{1}{2}\log(\det(\Sigma_2)) \\
&\qquad\qquad - \frac{1}{2} \int_{\RR^J} (2 \pi)^{-\frac{J}{2}} \det(\Sigma_1)^{-\frac{1}{2}} \exp\left(-\frac{1}{2}\lVert L_1(\vec{y}-\vec{\mu}_1)\rVert_2^2 \right) \lVert L_2(\vec{y}-\vec{\mu}_2) \rVert_2^2  \,  \text{d}\vec{y}\\
&= -\frac{J}{2} \log(2\pi) - \frac{1}{2}\log(\det(\Sigma_2)) \\
&\qquad\qquad - \frac{1}{2} \left( \lVert L_2 (\vec{\mu}_1 - \vec{\mu}_2) \rVert_2^2 + \trace(\Sigma_1 \Sigma_2^{-1})\right) \\
&= -\frac{J}{2} \log(2\pi) - \frac{1}{2}\log(\det(\Sigma_2)) - \frac{1}{2} (\vec{\mu_1}-\vec{\mu}_2)^T \Sigma_2^{-1} (\vec{\mu}_1 - \vec{\mu}_2) - \frac{1}{2}\trace(\Sigma_1 \Sigma_2^{-1}),
\end{split} \end{equation}
which concludes our proof.
\end{proof}
\begin{corollary}
When $\Sigma_1 = \vec{\sigma}_1 \mathbf{I}$ and $\Sigma_2 = \vec{\sigma}_2 \mathbf{I}$, Equation \eqref{eq:int-multivariate-gaussian-log-gaussian-result} simplifies to Equation \eqref{eq:int-gaussian-log-gaussian-result} in Lemma \ref{lemma:int-gaussian-log-gaussian}, i.e. the result in \cite{jiang2016variational}.
\end{corollary}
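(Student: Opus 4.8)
The plan is to specialize the general identity in Equation~\eqref{eq:int-multivariate-gaussian-log-gaussian-result} by substituting diagonal covariance matrices and evaluating each of its four terms componentwise. Concretely, I read the hypothesis $\Sigma_1 = \vec{\sigma}_1\mathbf{I}$ and $\Sigma_2 = \vec{\sigma}_2\mathbf{I}$ in the sense of Lemma~\ref{lemma:int-gaussian-log-gaussian}, namely $\Sigma_i = \operatorname{diag}(\sigma_{i,1}^2,\dots,\sigma_{i,J}^2)$ with $\vec{\sigma}_i^2 \in \RR^J$, so that all that remains is bookkeeping with diagonal matrices. Since both lemmas describe the same quantity $\int \Prob{\mathbf{Y}_1}\log\Prob{\mathbf{Y}_2}\,\mathrm{d}\vec{y}$, it suffices to show the two right-hand sides agree under this substitution.

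First I would handle the normalization and determinant terms. Because $\Sigma_2$ is diagonal, $\det(\Sigma_2) = \prod_{j=1}^J \sigma_{2,j}^2$, hence $\log\det(\Sigma_2) = \sum_{j=1}^J \log(\sigma_{2,j}^2)$; combining this with $-\tfrac{J}{2}\log(2\pi) = -\tfrac12\sum_{j=1}^J \log(2\pi)$ gives $-\tfrac12\sum_{j=1}^J \log(2\pi\sigma_{2,j}^2)$, which is exactly the first group of terms on the right-hand side of Equation~\eqref{eq:int-gaussian-log-gaussian-result}. Next I would expand the Mahalanobis term and the trace term. Since $\Sigma_2^{-1} = \operatorname{diag}(\sigma_{2,1}^{-2},\dots,\sigma_{2,J}^{-2})$, the quadratic form becomes $(\vec{\mu}_1-\vec{\mu}_2)^T\Sigma_2^{-1}(\vec{\mu}_1-\vec{\mu}_2) = \sum_{j=1}^J (\mu_{1,j}-\mu_{2,j})^2/\sigma_{2,j}^2$, and $\Sigma_1\Sigma_2^{-1} = \operatorname{diag}(\sigma_{1,1}^2/\sigma_{2,1}^2,\dots,\sigma_{1,J}^2/\sigma_{2,J}^2)$, so $\trace(\Sigma_1\Sigma_2^{-1}) = \sum_{j=1}^J \sigma_{1,j}^2/\sigma_{2,j}^2$. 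Substituting these evaluations into Equation~\eqref{eq:int-multivariate-gaussian-log-gaussian-result} and pulling out the common factor $-\tfrac12$ reproduces Equation~\eqref{eq:int-gaussian-log-gaussian-result} term by term.

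There is no genuine obstacle here: the statement is a direct corollary obtained by substitution, and the only point needing care is the purely notational identification of $\vec{\sigma}\mathbf{I}$ with the diagonal covariance of Lemma~\ref{lemma:int-gaussian-log-gaussian} (in particular keeping straight that the entries are the variances $\sigma_{i,j}^2$, not the standard deviations). As an independent sanity check one can observe that a Gaussian with diagonal covariance factors as a product of $J$ independent univariate Gaussians, so the integral splits as a sum of $J$ scalar cross-entropy integrals, each of which yields the corresponding summand of Equation~\eqref{eq:int-gaussian-log-gaussian-result}; I would mention this briefly but present the substitution argument above as the main proof.
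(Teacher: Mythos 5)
Your proposal is correct: the componentwise evaluation of $\log\det(\Sigma_2)$, the Mahalanobis term, and $\trace(\Sigma_1\Sigma_2^{-1})$ for diagonal covariances recovers Equation~\eqref{eq:int-gaussian-log-gaussian-result} term by term, and your remark about reading $\vec{\sigma}_i\mathbf{I}$ as the diagonal matrix of variances $\sigma_{i,j}^2$ resolves the only notational subtlety. The paper states this corollary without proof, and your direct-substitution argument is exactly the routine verification it leaves implicit, so there is nothing further to compare.
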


\end{document}